\documentclass{article} 
\usepackage{iclr2027_conference,times}

\usepackage{amsmath,amsfonts,bm}

\def\eqref#1{equation~\ref{#1}}

\def\1{\bm{1}}

\DeclareMathAlphabet{\mathsfit}{\encodingdefault}{\sfdefault}{m}{sl}
\SetMathAlphabet{\mathsfit}{bold}{\encodingdefault}{\sfdefault}{bx}{n}

\usepackage{hyperref}
\usepackage{url}

\usepackage{rotating}
\usepackage{booktabs}
\usepackage{amsmath,amssymb} 
\usepackage{color}
\usepackage{algorithm}
\usepackage{algorithmic}
\usepackage{graphicx}
\usepackage{multirow}
\usepackage{subfigure}
\usepackage{comment}
\usepackage{stackengine}
\usepackage{colortbl}
\usepackage{amsthm}     
\usepackage{caption}
\usepackage{wrapfig}
\usepackage{physics}
\usepackage{amssymb}
\usepackage{amsmath}
\usepackage{mathtools}
\usepackage{cleveref}
\usepackage[most]{tcolorbox}
\usepackage{enumitem}

\newtheorem{definition}{Definition}
\newtheorem{proposition}{Proposition}
\newtheorem*{proposition*}{Proposition}

\newtheorem*{theorem*}{Theorem}



\def\d{\text{d}}

\newcommand{\bw}{\boldsymbol{w}}

\newcommand{\bB}{\boldsymbol{B}}

\newcommand{\bJ}{\boldsymbol{J}}

\newcommand{\bx}{\boldsymbol{x}}

\newcommand{\btheta}{\boldsymbol{\theta}}

\theoremstyle{remark}

 \title{Advancement or Acceleration?\\
How Data Curation Improves LLM Training}

\author{Jiaxing Wang\thanks{Equal Contribution.}\hspace{1ex}$^{1}$, \quad Zicheng Zhang\footnote[1]{}\hspace{1ex}$^{1}$, \quad Deping Xiang$^{1}$, \quad Jin Xu$^{2}$\\   
\textbf{Yong Li$^{1}$, \quad Chao Liu$^{1}$, \quad Pengzhang Liu$^{1}$, \quad Qixia Jiang$^{1}$} \\
$^1$JD.com \quad \quad $^2$University of Oxford \\
\texttt{\{wangjiaxing41, zhangzicheng6, xiangdeping1,   liyong5\}@jd.com} \\
\texttt{\{liuchao397, liupengzhang, jiangqixia\}@jd.com} \\
\texttt{jin.xu@stats.ox.ac.uk} 
}

\iclrfinalcopy 
\begin{document}

\maketitle

\begin{abstract}
While data curation is widely credited with improving large language model training, a basic question remains unresolved: does curation expand what a model can ultimately learn, or merely reach the same solution faster? In this paper, we formalize this distinction between \emph{advancement} (improving the attainable performance) and \emph{acceleration} (reaching the same performance with less data). Through an analytic characterization and real-world LLM experiments, we show that advancement requires a correctable train–target gap. When the baseline training and target objectives coincide, no curation policy improves the endpoint, although curation may still accelerate training. This reframes advancement as a property of the curation task rather than the curation method. Under a common gap-centric view of data mixture and selection, we operationalize this condition as the $\Delta$-test, an observable discrepancy between the uncurated baseline's training and target losses that is computable prior to expensive curation. Across extensive experiments, an appreciably nonzero $\Delta$ predicts persistent gains, whereas gains near $\Delta=0$ tend to diminish as the training budget grows. Beyond individual methods, our findings recenter data curation on the task itself and point to target-set construction as a promising direction for shaping what curation can achieve.

\end{abstract}

\section{Introduction}
\label{sec:intro}

In recent years, as the cost-effectiveness of pure scaling declines, how well we use the data is drawing increasing attention. As a primary approach to this, data curation takes two main forms: \emph{data mixture}, which balances the proportions of domain-specific data in training sets to ensure intended capabilities~\citep{xie2023doremi,fan2024doge,chen2025aioli,liu2025regmix}, and \emph{data selection}, which chooses informative training examples that best improve performance~\citep{li2024dclm,liu2024deita,lin2024rho1,yu2024mates}.  Remarkable gains have been reported, and pruning data has even been argued to \emph{beat the power-law scaling} \citep{sorscher2022beyondsl}. These results suggest that data curation is a promising direction for improving models, or so it would seem.

Nevertheless, when compared carefully, the advantage of data curation sometimes becomes surprisingly fragile. \citet{chen2025aioli} shows that optimized mixtures do not always outperform the uniform allocation, while elaborate selection rules at scale may barely exceed, and at times underperform, random sampling~\citep{xia2024random}. 
These observations appear contradictory because performance at a fixed training budget conflates two effects. Curation may improve the best solution the model can attain, or it may merely reduce the data needed to reach the same solution. We call the former \emph{advancement} and the latter \emph{acceleration}, leading to our central question:
\emph{Does data curation improve the attainable optimum, or only reach the same performance faster?}  

We take a gap-centric account of this question, and demonstrate that a property of the task governs advancement: the \emph{train--target gap} between the distribution induced by the uncurated baseline and the target we evaluate on. Data curation can advance a model only when such a gap exists and can be corrected. Without one, no allocation improves on the attainable performance, though curation may still accelerate learning. To understand this mechanism, we derive an analytic characterization in a simplified but instructive setting: perceptron learning in the teacher--student framework, where the performance of the converged model on the target under any allocation is determined by a closed system of self-consistent equations.

Our theoretical analysis reveals that a single principle governs both data mixture and selection: curation yields true advancement only when it corrects a genuine train--target mismatch. Without this underlying mismatch, techniques that prioritize informative examples or rebalance domains merely accelerate learning, with their advantages vanishing as sample budget grows. Because the existence of advancement is ultimately a property of the \emph{task} rather than the curation algorithm, it can be diagnosed in advance. We introduce the $\Delta$-test, a discrepancy between target and baseline-training losses, as a lightweight diagnostic of advancement potential. Crucially, it is computable from a single baseline run before committing any compute to data curation.

We validate this theoretical account on real LLM training across data mixture and selection settings. The empirical results confirm our predictions: persistent gains at the largest tested budgets emerge only when a correctable gap is present. Conversely, in regimes where $\Delta \approx 0$, the benefits of curation tend to diminish with additional training, reflecting mere acceleration. This cleanly explains recent contradictory findings—such as the surprising competitiveness of uniform mixtures~\citep{chen2025aioli} and random selection~\citep{xia2024random}—as their empirical settings happen to lie in near-zero-gap regimes. Our findings therefore suggest that advancement potential should be diagnosed before committing to increasingly complex allocation methods. More broadly, they point to target design as an important but underexplored complement to curation for future research since it shapes the train--target gap that curation can correct, dictating the improvement it can ultimately aim for.

Our contributions are threefold:
\begin{itemize}[leftmargin=*]
    \item We distinguish two fundamental effects of data curation---\emph{advancement}, which improves the attainable performance, and \emph{acceleration}, which reaches the same performance faster. We show that advancement requires a correctable train--target gap.

   \item We propose the $\Delta$-test, a lightweight diagnostic of advancement potential. Crucially, this indicator can be computed from a single baseline run, before committing to expensive curation.

    \item We validate our account on LLM training across data mixture and selection, confirming the theoretical predictions and unifying previously conflicting empirical findings under a single gap-dependent picture.
\end{itemize}

\section{Preliminary}
\label{sec:preliminary}

\paragraph{Curation as Data Allocation} 
Data curation improves models by reallocating the training budget across candidate data. We unify data mixture and selection as a single allocation operation $\bw\in\mathcal W$. 
Consider a training corpus organized into $K$ domains, where domain $k$ provides a
candidate training set $\mathcal D_{\mathrm{train}}^k$ and a target set $\mathcal D_{\mathrm{tar}}^k$. Training under an allocation $\bw$ yields a model $\btheta^\star(\bw)=\arg\min_{\btheta}\mathcal L_{\mathrm{train}}(\btheta;\bw)$. The two paradigms are of different granularities:
\begin{itemize}[leftmargin=*,itemsep=1pt]
\item \emph{Data mixture} ($\bw\in \mathcal{W}:= \{\bw \in \mathbb{R}^K \mid \sum_{k=1}^K w_k=1, w_k > 0 \}$): domain $k$ contributes a fraction $w_k$ of
the sampled training data.
\item \emph{Data selection} ($K=1$, $\bw \in \mathcal{W}:= \left\{\bw\in \{0,1\}^P \mid \sum_{p=1}^{P} w_p= f P \right\}$): A per-example keep decision retaining a subset of size $fP$ from $P$ total examples.
\end{itemize}
We assess $\btheta^\star(\bw)$ by its performance on the target, 
which is the distribution we care about at test time. For mixture the target is typically a domain composition $\widehat{\bw}$ over the target sets $\{\mathcal{D}_{\mathrm{tar}}^k\}$. Unlike the allocation $\bw$, $\widehat{\bw}$ is fixed by the task itself, reflecting how much each domain matters at test time, rather than chosen by curation. For selection, the target is simply the held-out distribution, with no domain-level composition to specify.

\paragraph{What Should Curation Beat?} To evaluate a curation choice, we must compare it against an \emph{uncurated baseline} $\bw_{\text{baseline}}$ that performs no re-allocation. For mixture this is typically uniform allocation $\bw_{\text{baseline}}=\frac{1}{K}\mathbf{1}$. For selection, it is the full retention $f=1$ and $\bw_{\text{baseline}} = \mathbf{1}$. Despite a fast-growing arsenal of curation methods (DoGE~\citep{fan2024doge},  TANDEM~\citep{wang2025tandem}, RHO-1~\citep{lin2024rho1}, BLADE~\citep{wang2026blade}, among others), these simple baselines remain stubbornly hard to beat in some cases.

\section{An Analytic Theory of Data Curation}
\label{sec:analytic_theory}
To understand why these baselines are so hard to beat, we recognize that a curation choice can have two different effects: \emph{advancement}, yielding a genuinely better attainable model, or \emph{acceleration}, reaching the same performance faster. Distinguishing them requires knowing how the learned model behaves under a certain allocation, and how this behavior changes as the sample budget grows. We make this precise with an analytic characterization based on statistical-mechanics, and show that advancement requires a correctable train--target gap. Otherwise, curation can only accelerate.

\subsection{The Teacher--Student Curation Model}
\label{sec:replica_model}

We model data curation using the perceptron in a teacher--student setting~\citep{engel2001statisticalml}. In the high-dimensional limit $N, P \to \infty$ with a data budget $\alpha = P/N$, the student $\bJ^\star(\bw) = \arg\min_{\bJ} \mathcal L_{\mathrm{train}}(\bJ;\bw)$ attains a target generalization performance $\mathcal E_\alpha(\bw)$~\footnote{Following the statistical-mechanics convention $\bJ$ denotes the student parameters and corresponds to the general model parameter vector $\btheta$ used above.}. Tracking $\mathcal E_\alpha(\bw)$ across $\alpha$ allows us to separate advancement from acceleration. For any $\bw$, $\mathcal E_\alpha(\bw)$ is characterized analytically via the replica method~\citep{bruno2021learningcurve,sorscher2022beyondsl}, yielding self-consistent equations that we solve numerically (derivations deferred to Appendix~\ref{app:mixture} and~\ref{app:imperfect}).

Specifically, Domain $k$ provides training data $\mathcal D_{\mathrm{train}}^k=\{(\bx_i^k,y_i^k)\}_{i=1}^{P_k}$, where $\bx_i^k\sim\mathcal N(\mathbf 0,\mathbf\Sigma_k)$ and labels $y_i^k=\operatorname{sign}(\bB_k^\top\bx_i^k)$ are generated by a domain teacher $\bB_k\in\mathbb R^N$. 
We instantiate the two curation paradigms as follows:

\begin{itemize}[leftmargin=*,itemsep=1pt]
\item \emph{Data mixture:} Without loss of generality, we consider two domains $K=2$ and let the policy $\bw=(w_1,w_2)$ allocate the budget $P$ by sampling $P_k=w_kP$ examples from domain $k$. The target performance is the weighted loss $\mathcal E(\bJ)=\sum_{k=1}^2\widehat w_k\,\mathcal L_{\mathrm{tar}}^k(\bJ)$ under a fixed domain composition $\widehat{\bw}$. The train--target gap is directly governed by the mismatch between $\bw_{\mathrm{baseline}}$ and $\widehat{\bw}$.

\item \emph{Data selection:}  This setting considers $K=1$, we evaluate generalization along a distinct \emph{target teacher} $\bB_{\text{tar}}$ at an angle $\phi$ from the training teacher $\bB$. This angle $\phi$ defines the train--target gap. 
From a candidate pool, a target-aware selection rule retains a fraction $f$ of the samples based on a margin criterion along $\bB_{\text{tar}}$, while preferentially filtering out examples whose training labels conflict with $\bB_{\text{tar}}$. This rule serves as a controlled gap-correcting policy: testing it at $\phi=0$ and $\phi>0$ isolates the benefit of gap correction from mere target awareness. The selected data yields a student with target error $\mathcal{E}(\bJ) = \arccos(\rho)/\pi$, where $\rho = \frac{\bJ^{\top} \bB_{\text{tar}}}{\| \bJ \| \| \bB_{\text{tar}} \|}$ is the student--target overlap.
\end{itemize}

\begin{figure}[t]
\centering
\begin{minipage}{0.325\linewidth}
    \centering
    \includegraphics[width=1.02\textwidth]{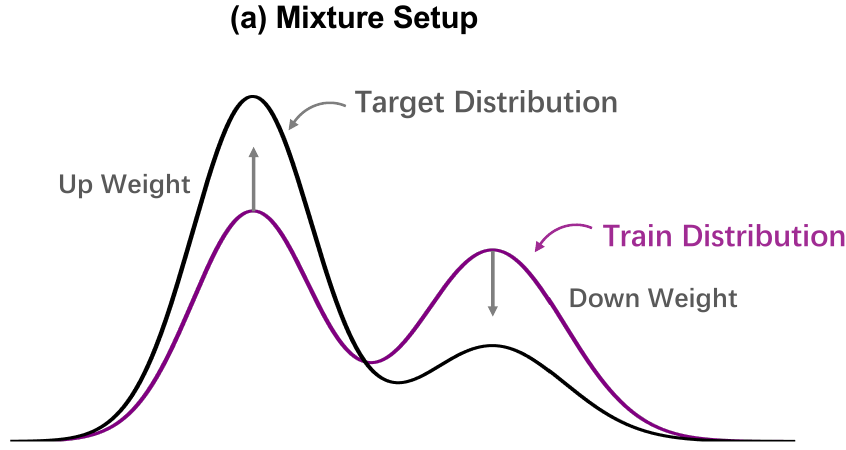}
\end{minipage}
\begin{minipage}{0.325\linewidth}
    \centering
    \includegraphics[width=1.0\linewidth]{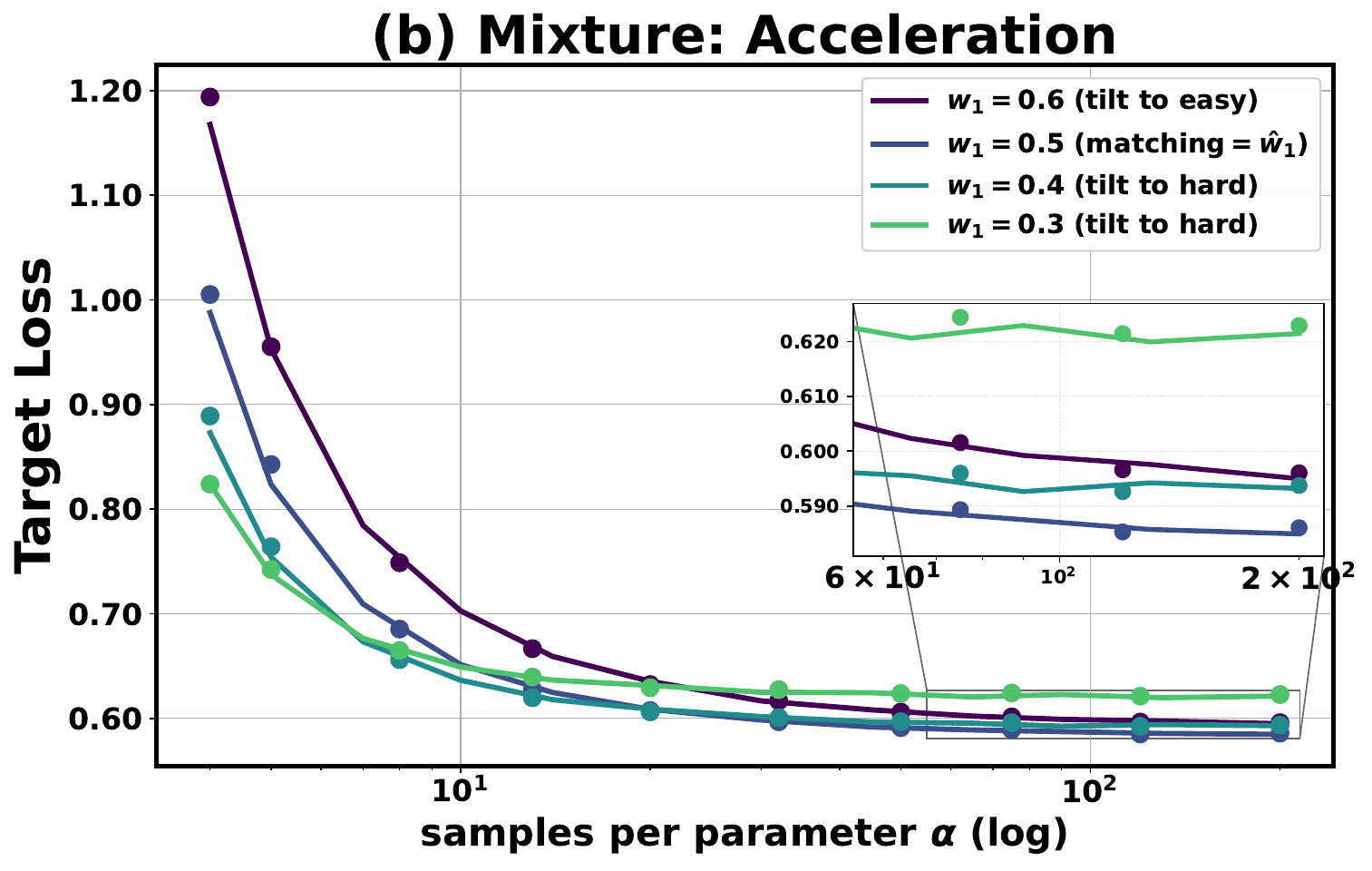}
\end{minipage}
\begin{minipage}{0.325\linewidth}
    \centering
    \includegraphics[width=1.0\linewidth]{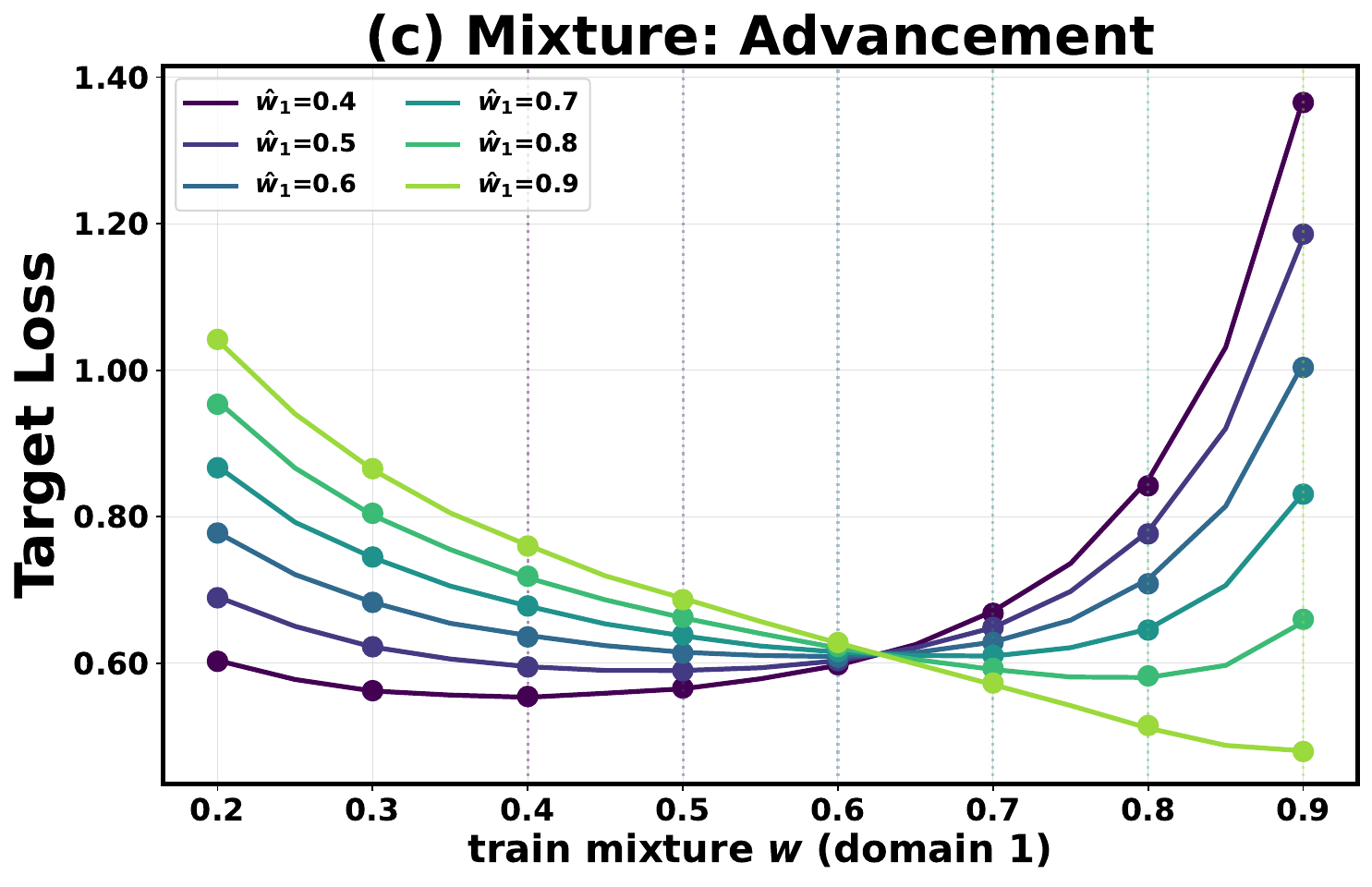}
\end{minipage}
\begin{minipage}{0.325\linewidth}
    \centering
    \includegraphics[width=1.0 \textwidth]{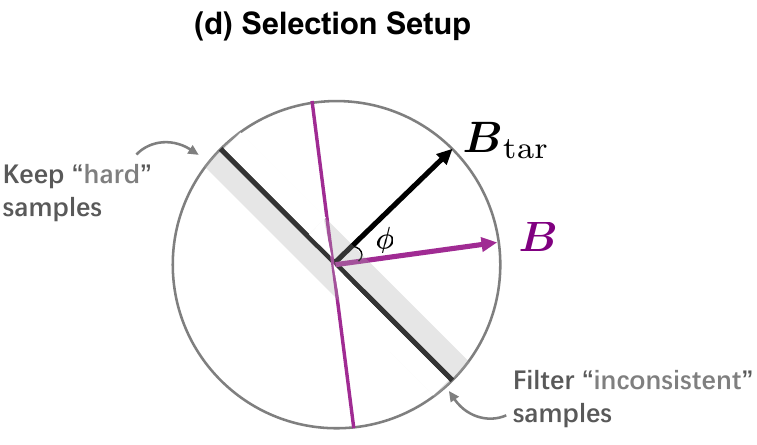}
\end{minipage}
\begin{minipage}{0.325\linewidth}
    \centering
    \includegraphics[width=0.98\linewidth]{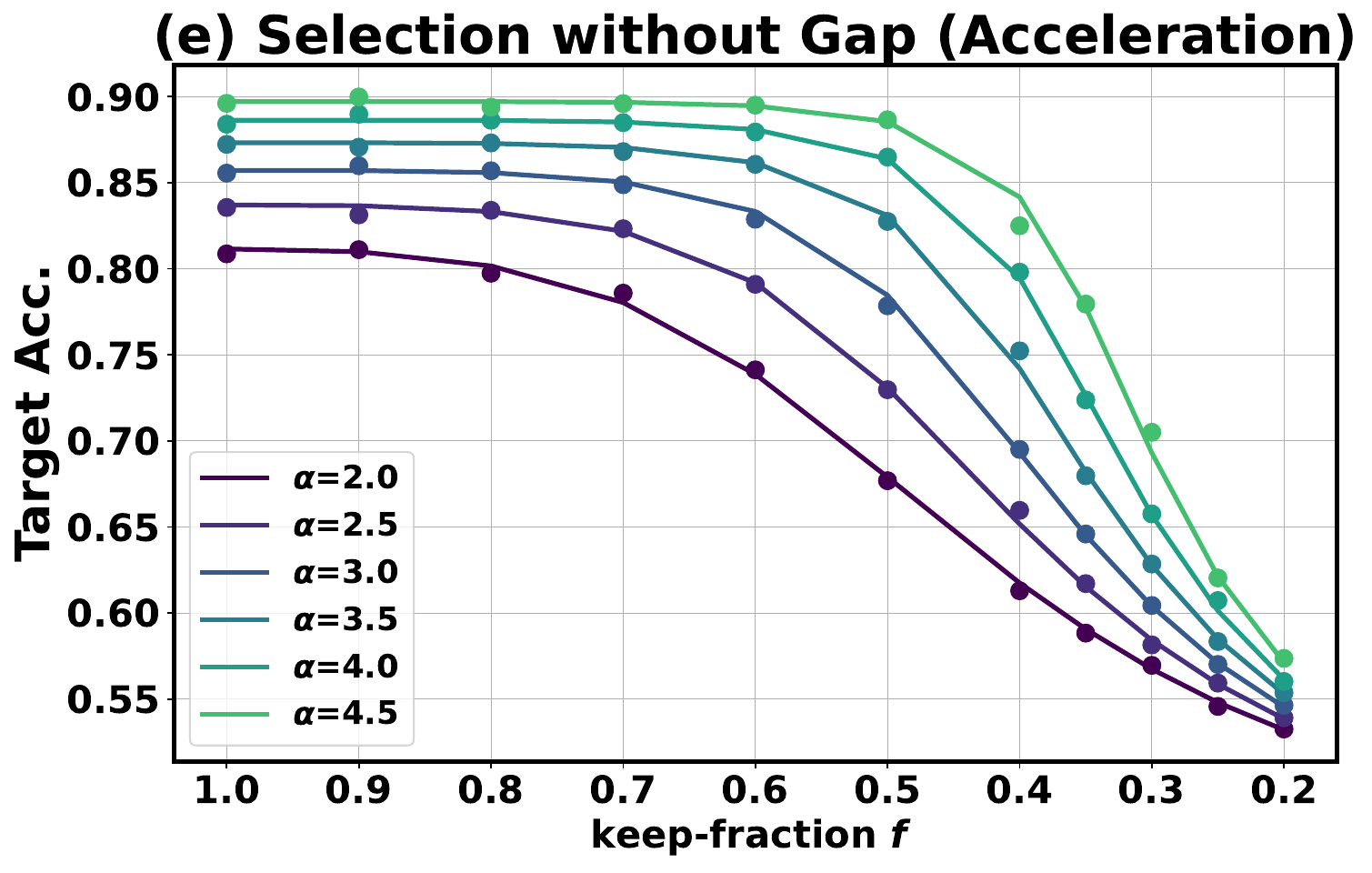}
\end{minipage}
\begin{minipage}{0.325\linewidth}
    \centering
    \includegraphics[width=0.96\linewidth]{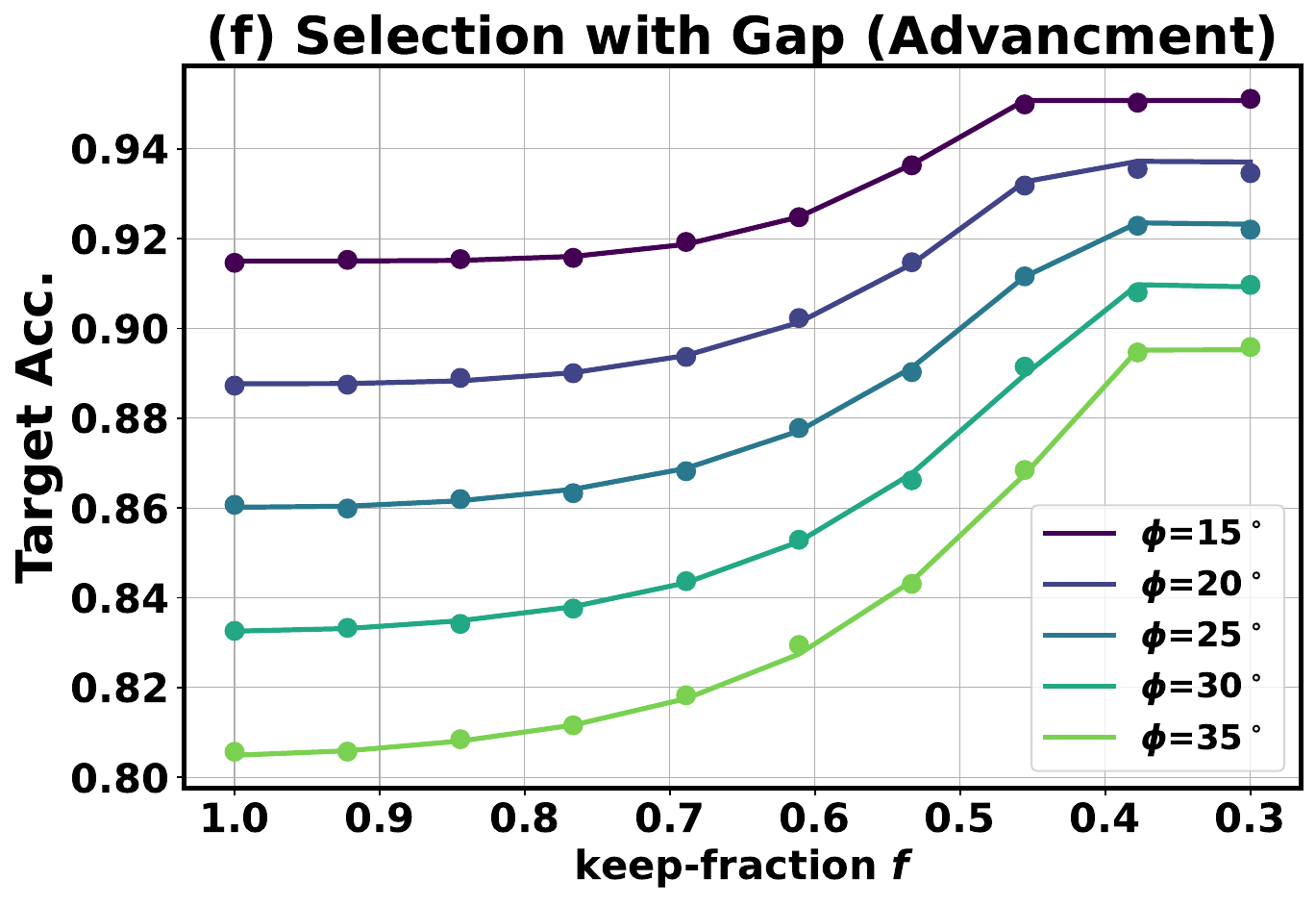}
\end{minipage}
\caption{
An analytic account of data curation. Theory (solid lines) closely matches $N=200$ perceptron simulations (circles). \emph{Top row---mixture.} (a) \emph{The setup.} Two-domain training and target mixtures with compositions $\bw$ and $\widehat{\bw}$. (b) \emph{Acceleration.} Tilting the training mixture toward the harder domain ($w_1<\widehat w_1=0.5$) lowers target loss at small budgets, but the gain vanishes as $\alpha$ grows. (c) \emph{Advancement.} At a large budget~($\alpha=100$), each target loss vs. mixture curve is minimized by target matching, $\bw^\star=\widehat{\bw}$. Appendix~\ref{app:mixture} shows that this holds as $\alpha\to\infty$. \emph{Bottom row---selection.} (d) \emph{The setup.} A selection policy favors examples near the target boundary and filters those whose labels conflict with the target. (e) \emph{Acceleration.} At zero gap ($\phi=0$), moderate selection preserves accuracy, but never exceeds full retention ($f=1$). Aggressive selection degrades performance. (f) \emph{Advancement.} With $\phi>0$, target accuracy initially rises as $f$ decreases: improved train--target alignment outweighs the reduction in sample size. Reaches an interior optimum $f^\star<1$, and the gain increases with $\phi$. Appendix~\ref{app:imperfect} shows that an interior selection policy remains better than full retention as $\alpha \to \infty$.}
\label{fig:mixture_ratio_evolve}
\end{figure}

\subsection{Advancement and Acceleration}
\label{sec:accopt}

$\mathcal E_\alpha(\bw)$ traces target performance across data budgets $\alpha$, revealing two distinct effects of curation: changing the \emph{endpoint}, i.e., the best attainable performance, or accelerating the \emph{approach}, i.e., reducing the budget required to reach a given performance level. We refer to these effects as \emph{advancement} and \emph{acceleration}, respectively.

\begin{definition}[Advancement and acceleration]
\label{def:adv_acc}
Let $\mathcal E_\infty(\bw)=\lim_{\alpha\to\infty}\mathcal E_\alpha(\bw)$ denote the limiting target performance. An allocation provides \emph{advancement} if it improves the endpoint, $\mathcal E_\infty(\bw)<\mathcal E_\infty(\bw_{\text{baseline}})$. For a performance level $\varepsilon$ attainable by both allocations, let $\alpha_{\bw}(\varepsilon)=\inf\{\alpha:\mathcal E_\alpha(\bw)\leq\varepsilon\}$ be the budget at which $\bw$ first reaches $\varepsilon$. The allocation provides \emph{acceleration} at $\varepsilon$ if it reaches this level with fewer examples, $\alpha_{\bw}(\varepsilon)<\alpha_{\bw_{\text{baseline}}}(\varepsilon)$.
\end{definition}

These effects are not mutually exclusive, but they differ in persistence: advancement persists as the budget grows, whereas acceleration yields transient gains that eventually vanish. The analytical characterization of these population endpoints are given in Appendix~\ref{app:mixture} and Appendix~\ref{app:imperfect} for mixture and selection. respectively. 

\paragraph{When is advancement possible?}
The distinction above raises a structural question: when can curation improve the attainable target performance, rather than merely reduce the data budget needed to reach a given performance level? A necessary condition is a \emph{correctable train--target gap}---a mismatch between the training distribution induced by the baseline, $\mathcal P_{\mathrm{train}}(\bw_{\mathrm{baseline}})$, and the target distribution $\mathcal P_{\mathrm{target}}$ that some feasible policy can reduce. Without such a mismatch, curation has nothing to correct and can at most accelerate. A correctable gap instead provides optimization headroom for advancement, although it does not guarantee improvement.
\Cref{sec:results} formalizes this condition and instantiates it for data mixture and data selection.

\subsection{Results}
\label{sec:results}
We now instantiate the gap criterion for data mixture and selection. \Cref{fig:mixture_ratio_evolve} shows close agreement between the analytic predictions and finite-dimensional simulations.

\paragraph{Mixture: Acceleration without advancement.}
Under a uniform target $\widehat{\bw}=\bw_{\mathrm{baseline}}=(0.5,0.5)$, there is no train--target gap. We introduce heterogeneous domain difficulty by giving domain~2 nuisance variance orthogonal to its teacher $\bB_2$, thereby lowering its signal-to-noise ratio without changing its labeling rule (Appendix~\ref{app:mixture}). Allocating more data to this harder domain ($w_1<0.5$) lowers the target loss at small budgets (\Cref{fig:mixture_ratio_evolve}(b)), but the gain vanishes as $\alpha$ grows, and the uniform baseline attains the best endpoint. Domain difficulty improves finite-budget performance without changing the attainable endpoint: it provides acceleration, not advancement.

\paragraph{Mixture: Advancement by matching the target.}
Unlike the finite-budget effect of domain difficulty, the infinite-data endpoint is determined by alignment between the training and target mixtures. At the large budget shown in \Cref{fig:mixture_ratio_evolve}(c), each curve is minimized at $w_1=\widehat w_1$, despite heterogeneous domain difficulty. Appendix~\ref{app:mixture} establishes this result analytically for the large-budget limit ($\alpha \rightarrow \infty$): when the training and target distributions span the same domains, target matching is globally optimal, $\bw^\star=\widehat{\bw}$. Consequently, if $\widehat{\bw}=\bw_{\mathrm{baseline}}$, the baseline already matches the target and no mixture can provide advancement. If $\widehat{\bw}\neq\bw_{\mathrm{baseline}}$, matching corrects the distribution gap and yields $\mathcal E_\infty(\widehat{\bw}) < \mathcal E_\infty(\bw_{\mathrm{baseline}})$, providing genuine advancement.

\paragraph{Selection: A gap creates optimization headroom.}
The same selection policy behaves differently depending on the train--target gap. With $\phi=0$, moderate pruning preserves accuracy, but no policy exceeds full retention over the displayed budgets (\Cref{fig:mixture_ratio_evolve}(e)). With $\phi>0$, target accuracy initially increases as the retained fraction $f$ decreases (\Cref{fig:mixture_ratio_evolve}(f)). Although fewer examples are used, the selection preferentially removes target-inconsistent examples while retaining informative examples near the target boundary. The resulting gain in train--target alignment outweighs the loss in sample size. Beyond the interior optimum $f^\star$, the reduction in the training budget dominates, causing accuracy to plateau (in the large-budget limit, $\alpha \rightarrow \infty$) or decline (at finite budgets). Appendix~\ref{app:imperfect} derives these finite-budget curves and shows that the improvement persists in the large-budget limit.

We now show that the impossibility of advancement without a gap is not an artifact of our perceptron model, but a fundamental algebraic consequence of a bi-level structure. Curation seeks an allocation $\bw$ to minimize the target loss (outer problem) subject to training convergence (inner problem).

\begin{proposition}[Informally]
\label{prop:impossibility_main}
Consider the bi-level curation problem $\min_{\bw}\mathcal L_{\mathrm{tar}}(\btheta^\star(\bw))$, where $\btheta^\star(\bw)\in\arg\min_{\btheta} \mathcal L_{\mathrm{train}}(\btheta;\bw)$. Suppose there is no gap, such that the population training objective~(expected losses) under the uncurated baseline $\bw_{\mathrm{baseline}}$ coincides with the population target objective. Then $\bw_{\mathrm{baseline}}$ is a global minimizer of the outer objective: no allocation attains a lower target endpoint, and curation can at most accelerate.
\end{proposition}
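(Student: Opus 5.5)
The argument is a direct sandwich inequality, carried out at the level of population objectives, i.e.\ the endpoint $\alpha\to\infty$ of Definition~\ref{def:adv_acc}. Write $\bar{\mathcal L}_{\mathrm{train}}(\btheta;\bw)$ for the population (expected) training loss under allocation $\bw$ and $\bar{\mathcal L}_{\mathrm{tar}}(\btheta)$ for the population target loss. The endpoint of an allocation is then $\mathcal E_\infty(\bw)=\bar{\mathcal L}_{\mathrm{tar}}(\btheta^\star(\bw))$ with $\btheta^\star(\bw)\in\arg\min_{\btheta}\bar{\mathcal L}_{\mathrm{train}}(\btheta;\bw)$. The no-gap hypothesis is the identity $\bar{\mathcal L}_{\mathrm{train}}(\btheta;\bw_{\mathrm{baseline}})=\bar{\mathcal L}_{\mathrm{tar}}(\btheta)$ for all $\btheta$. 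In the mixture case this is $\bw_{\mathrm{baseline}}=\widehat{\bw}$ with matching per-domain losses. In the selection case it is $f=1$ with $\phi=0$.

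\textbf{Step 1 (baseline attains the unconstrained optimum).} Since the two objectives coincide, $\btheta^\star(\bw_{\mathrm{baseline}})$ minimizes $\bar{\mathcal L}_{\mathrm{train}}(\cdot;\bw_{\mathrm{baseline}})$ and therefore minimizes $\bar{\mathcal L}_{\mathrm{tar}}$. Hence
\[
\mathcal E_\infty(\bw_{\mathrm{baseline}})=\min_{\btheta}\bar{\mathcal L}_{\mathrm{tar}}(\btheta).
\]

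\textbf{Step 2 (every allocation is lower bounded by it).} For any feasible $\bw\in\mathcal W$, the point $\btheta^\star(\bw)$ is some parameter vector, so
\[
\mathcal E_\infty(\bw)=\bar{\mathcal L}_{\mathrm{tar}}(\btheta^\star(\bw))\ge\min_{\btheta}\bar{\mathcal L}_{\mathrm{tar}}(\btheta)=\mathcal E_\infty(\bw_{\mathrm{baseline}}).
\]
So no allocation is advancing in the sense of Definition~\ref{def:adv_acc}, and $\bw_{\mathrm{baseline}}$ is a global minimizer of the outer problem.

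\textbf{Step 3 (curation can still accelerate).} At finite $\alpha$ the empirical training loss deviates from the population target, so $\mathcal E_\alpha(\bw)<\mathcal E_\alpha(\bw_{\mathrm{baseline}})$ is not excluded. Any such gain is bounded above by $\mathcal E_\alpha(\bw_{\mathrm{baseline}})-\mathcal E_\infty(\bw_{\mathrm{baseline}})$, assuming the baseline is consistent. This gap vanishes as $\alpha\to\infty$, which is exactly the acceleration regime. The perceptron results in \Cref{fig:mixture_ratio_evolve}(b,e) show that this case actually occurs.

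\textbf{Main obstacle.} The hardest point is making the formal statement precise rather than the inequality itself. Two issues need care.

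First, $\arg\min$ may not be unique. If $\btheta^\star(\bw_{\mathrm{baseline}})$ is an arbitrary element of the argmin set, Step 1 is unaffected, because every element minimizes $\bar{\mathcal L}_{\mathrm{tar}}$. For other $\bw$, Step 2 holds for every selection from the argmin set, so no optimistic or pessimistic convention is needed.

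Second, one must justify that the limit $\mathcal E_\infty$ equals the target loss of the population minimizer. I would state this as an assumption, namely consistency: empirical minimizers converge in target loss to population minimizers. In the perceptron setting this is supplied by the analyses in Appendices~\ref{app:mixture} and~\ref{app:imperfect}. If minimizers do not exist, for instance for separable data with unbounded $\btheta$, I would replace $\min$ by $\inf$ and the same sandwich argument goes through.
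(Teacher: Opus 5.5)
Your sandwich argument is correct and is essentially the paper's proof: Proposition~\ref{prop:impossibility_mixture} is exactly your Steps~1--2 (at the target-matching allocation the inner objective is the outer objective, so its minimizer does no worse than any $\btheta^\star(\bw)$), and Proposition~\ref{prop:selection-impossibility} handles selection with the same consistency-of-ERM assumption you flag, only phrased through the uniform soft allocation $\bar w_i=f$ so that the comparison stays inside $\mathcal W_f^{\mathrm{soft}}\supset\mathcal W_f^{\mathrm{hard}}$. One small slip in Step~3: with $\alpha$ indexing the candidate pool as in the paper, Figure~\ref{fig:mixture_ratio_evolve}(e) shows no zero-gap policy beating full retention, so only panel~(b) exhibits a strict finite-budget gain $\mathcal E_\alpha(\bw)<\mathcal E_\alpha(\bw_{\mathrm{baseline}})$.
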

The intuition is immediate: without a gap, the target objective is exactly the training objective that the baseline already globally minimizes. We defer the formal proof to Appendix~\ref{app:impossibility}, and next translate this distribution-level condition into an observable diagnostic.

\section{The \texorpdfstring{$\Delta$}{Delta}-Test}
\label{sec:delta_test}
The correctable train--target gap is a distribution-level condition, making it hard to verify directly. To bridge this, we introduce the $\Delta$-test: an empirical signature of the gap evaluated from a single uncurated baseline run, $\btheta(\bw_{\text{baseline}})$, before any curation is attempted. The test compares the baseline model's in-sample training loss with its target loss (Appendix~\ref{app:measuring_delta}). Because data mixture and selection operate at different granularities, we define two variants. \emph{For data mixture} over $K$ domains, the comparison is domain-wise, allowing the diagnostic to capture domains that are over- or under-served by the baseline. Using the baseline mixture weights $w_k$ and target weights $\widehat w_k$, we define
\begin{equation}
\Delta = \sum_{k=1}^K \left| \widehat w_k \mathcal L_{\mathrm{tar}}^k \bigl(\btheta(\bw_{\mathrm{baseline}})\bigr) - w_k \mathcal L_{\mathrm{train}}^k \bigl(\btheta(\bw_{\mathrm{baseline}})\bigr) \right|.
\label{eq:delta_test_mixture}
\end{equation}
The target weights ensure that the diagnostic reflects the distribution on which performance is ultimately evaluated. \emph{For data selection}
($K=1$), the training and target losses are scalars:
\begin{equation}
\Delta = \left| \mathcal L_{\mathrm{tar}} \bigl(\btheta(\bw_{\mathrm{baseline}})\bigr) - \mathcal L_{\mathrm{train}} \bigl(\btheta(\bw_{\mathrm{baseline}})\bigr) \right|,
\label{eq:delta_test_selection}
\end{equation}
where $\bw_{\mathrm{baseline}}$ denotes full retention ($f=1$).

Designed as a lightweight operational diagnostic rather than a statistical distance, $\Delta$ indicates the \emph{potential} for advancement. An appreciably nonzero value flags an observable train--target mismatch and hence possible advancement, provided that a feasible curation policy can correct the mismatch. Across both the mixture and selection models, the gain over the baseline increases with $\Delta$ (\Cref{fig:replica-gain-vs-delta}). Conversely, $\Delta\approx0$ suggests that curation will only accelerate training, although scalar losses may fail to reveal some distributional gaps. In practice, $\Delta$ can be estimated early in training using a small proxy model---after the gap has become visible but well before full convergence. 

\section{Experiments on Real Data}
\label{sec:experiments}

Our theory establishes a correctable train--target gap as a prerequisite for curation to advance a model. We test this in practical LLM training across data mixture and selection: does a nonzero $\Delta$ accompany persistent gains, while $\Delta \approx 0$ suggests limited headroom for consistent improvement over the baseline policy? To ensure a controlled comparison, we evaluate all baselines under a shared training and evaluation protocol and will release all code and configurations following internal approval. We run each experiment five times independently. Due to limited space, the main text reports means and shows standard-deviation bands where space permits. Complete results are provided in Appendix~\ref{app:real_world_stds}. Across all settings, the diagnostic $\Delta$ is evaluated at exactly 50$\%$ of the training budget.

\subsection{Mixture}
\label{sec:exp-mixture}
We evaluate three scenarios: a matched no-gap setting, an exogenous distribution shift, and an endogenous gap induced by limited domain data. Following \citet{chen2025aioli}, we train 160M GPT-style LMs~\citep{biderman2023pythia} on a 6B sampled version of SlimPajama~\citep{soboleva2023slimpajama} across all seven domains. We compare the Uniform baseline against five representative mixture optimizers: RegMix~\citep{liu2025regmix}, DoGE~\citep{fan2024doge}, Skill-It~\citep{chen2023skillit}, Aioli~\citep{chen2025aioli}, and TANDEM~\citep{wang2025tandem}--spanning mixing-law based, influence-based, and bi-level approaches. Appendix~\ref{app:real_data_details} describes each method and our complete experimental protocol.

\begin{table}[]
\caption{Perplexity in the no-gap~($\Delta=0.07$) mixture setting on SlimPajama. “Average” represents the exponential of the average loss across all domains as in~\citep{fan2024doge}.}
\label{tab:mixture_without_gap}
\centering
\setlength{\tabcolsep}{5pt}
\small
\begin{tabular}{@{}lcccccccc@{}}
\toprule
Methods  & \multicolumn{1}{l}{Arxiv} & \multicolumn{1}{l}{Book} & \multicolumn{1}{l}{C4} & \multicolumn{1}{l}{CommonCrawl} & \multicolumn{1}{l}{Github} & \multicolumn{1}{l}{Stackexchange} & \multicolumn{1}{l}{Wikipedia} & \multicolumn{1}{l}{Average} \\ \midrule                                                                      
Uniform  & 10.02                     & 39.61                    & 49.67                  & 42.70                           & 5.16                       & 12.25                             & 20.00                         & 19.71$\pm_{\text{0.17}}$                       \\
RegMix   &  10.53                    &     41.90                      &    48.67             &     41.65                          &        5.75                     &   11.73                           &      19.86                          &      19.80$\pm_{\text{0.12}}$                        \\
DoGE     &     10.96        &   35.49       &            42.42            &       36.81                          &     6.70                       &       13.97                            &        23.92                       &       20.17$\pm_{\text{0.16}}$                      \\
Skill-It &     10.64                      &     38.32                     &         47.99               &     40.62                            &        5.73                    &          12.07                         &           20.78                    &     19.68$\pm_{\text{0.16}}$                         \\
Aioli    &     10.09                      &        39.86                  &          49.81              &     42.83                            &   5.66                         &    12.36                                &        19.67             &        19.77$\pm_{\text{0.15}}$                     \\
TANDEM   &        10.18                    &  37.89                         &          46.77               &       40.41                           &     5.89                        &    12.61                                &     20.96                           &     19.66$\pm_{\text{0.15}}$                       \\ \bottomrule
\end{tabular}
\end{table}

\begin{figure}[t]
\centering
\begin{minipage}{0.45\linewidth}
    \centering
    \includegraphics[width=1.0\linewidth]{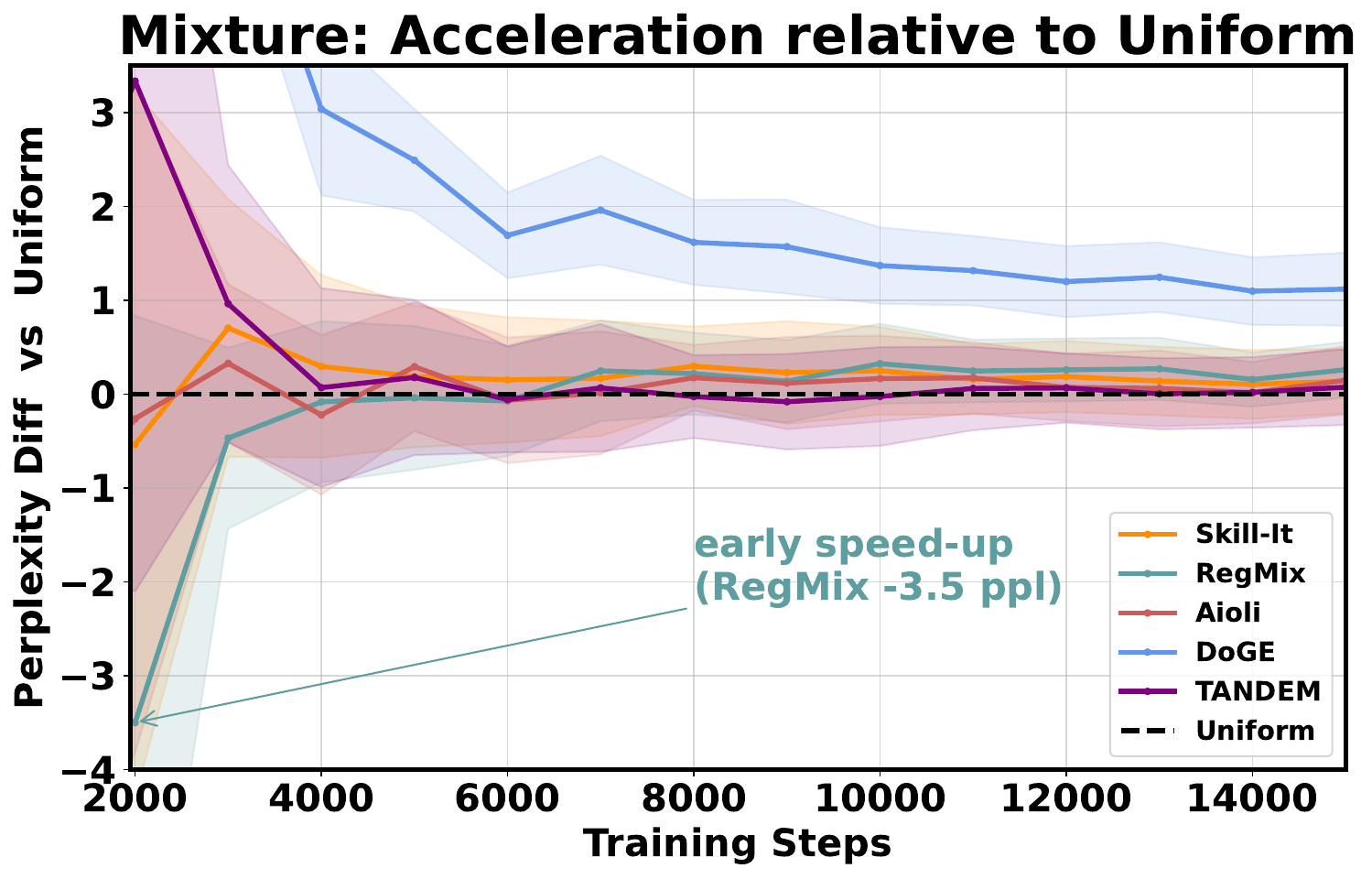}
\end{minipage}
\begin{minipage}{0.45\linewidth}
    \centering
    \includegraphics[width=0.98\linewidth]{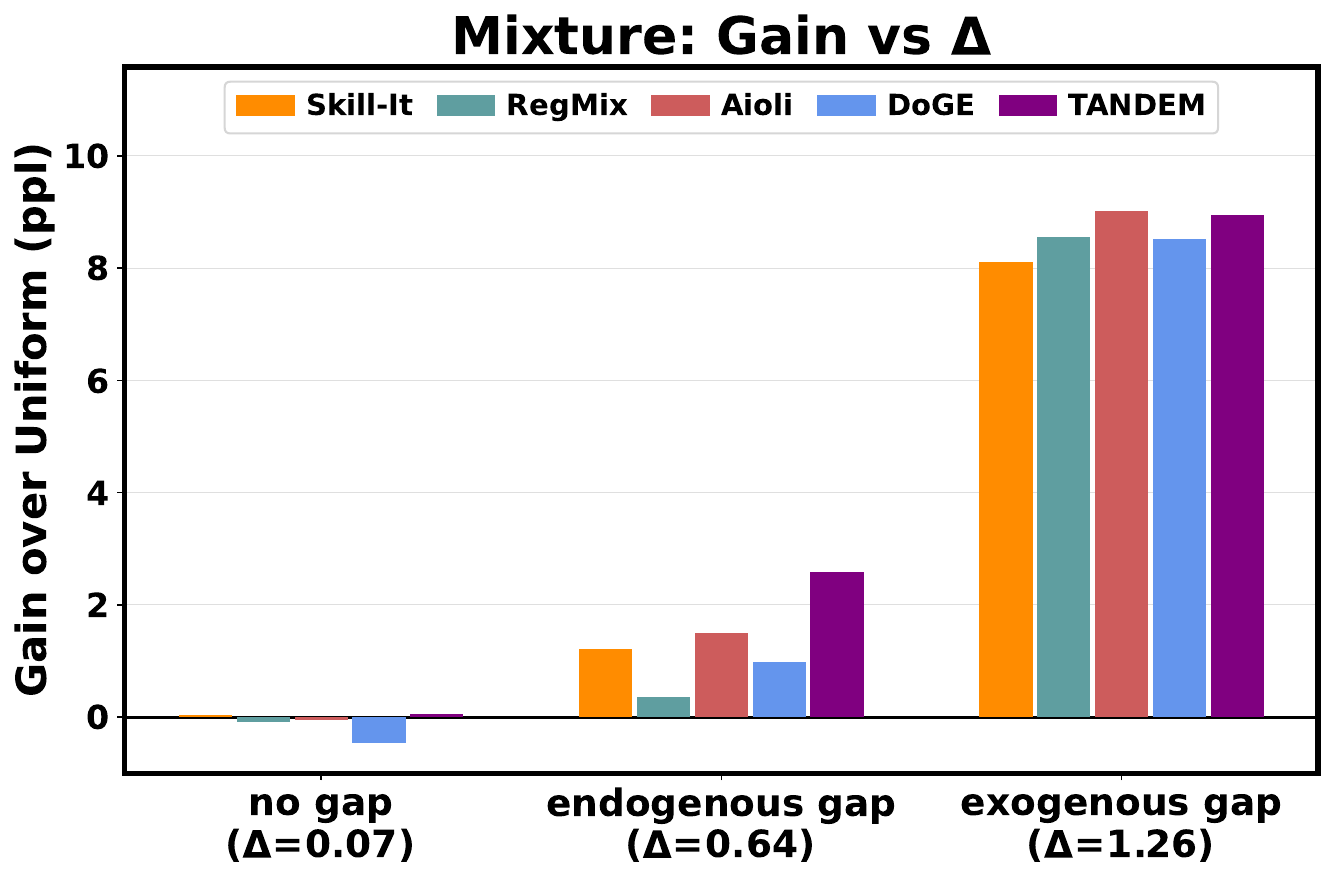}
\end{minipage}
\caption{\emph{Left:} With a negligible gap ($\Delta=0.07$), the perplexity difference relative to Uniform (method $-$ Uniform, lower is better) can be negative early but approaches zero with training. Bands show the standard deviation over five runs. \emph{Right:} Final gain over Uniform (higher is better) is negligible in the no-gap setting and becomes substantial under endogenous and exogenous gaps.}
\label{fig:mixture_results}
\end{figure}

\vspace{-1ex}
\paragraph{When $\Delta\approx0$, optimized mixtures accelerate but do not advance.} 
We first consider a setting in which the reference training distribution can match the uniformly weighted target. Although SlimPajama-6B is domain-imbalanced, in our training setting, Uniform allocation is feasible without repeatedly sampling a depleted domain, yielding a negligible train--target gap ($\Delta=0.07$).  From~\Cref{tab:mixture_without_gap} we see that the final perplexities of all optimized mixtures range from 19.66 to 20.17, within experimental variation of the Uniform baseline at 19.71. Reallocating training exposure across domains does not produce a consistently better result. At intermediate checkpoints, RegMix reduces perplexity by 3.5, but this advantage vanishes as the training proceeds~(\Cref{fig:mixture_results}, left). This no-gap setting exhibits acceleration rather than advancement, supporting our theoretical prediction.

\begin{table}[]
\caption{Final target perplexity under endogenous ($\Delta=0.64$) and exogenous ($\Delta=1.26$) mixture gaps. The endogenous block reports all target domains and their "Average". The exogenous block reports the perplexity of the held-out CommonCrawl target (CC).}
\label{tab:mixture_with_gap}
\centering
\setlength{\tabcolsep}{4pt}
\small
\begin{tabular}{@{}lcccccccc|c@{}}
\toprule
\multicolumn{9}{c|}{Endogenous Gap}                 & \multicolumn{1}{c}{Exogenous Gap} \\ \midrule
Methods  & \multicolumn{1}{l}{Arxiv} & \multicolumn{1}{l}{Book} & \multicolumn{1}{l}{C4} & \multicolumn{1}{l}{CC.} & \multicolumn{1}{l}{Github} & \multicolumn{1}{l}{Stackexchange} & \multicolumn{1}{l}{Wikipedia} & \multicolumn{1}{l|}{Average} & CC.                \\ \midrule
Uniform  & 13.14 & 57.44 & 65.57 & 57.93 & 8.21  & 19.59 & 33.90 & 28.60    &  49.19                \\
RegMix  & 13.33 & 54.12 & 63.74 & 56.63 & 8.23 & 20.03 & 33.34 & 28.24 &            40.63                \\
DoGE   & 12.95 & 54.82 & 62.84 & 55.52 & 8.13  & 18.61 & 32.72 & 27.62 &            40.67              \\
Skill-It & 14.54 & 44.24 & 48.99 & 42.10 & 8.33  & 21.10 & 49.46 & 27.38  &       41.08                \\
Aioli & 12.80 & 51.01 & 61.13 & 53.51 & 8.14  & 18.97 & 32.63 & 27.11 &          40.17                \\
TANDEM        & 12.60 & 49.76 & 54.19 & 47.24 & 8.35  & 17.98 & 33.51 & 26.02      &        40.24                    \\ \bottomrule
\end{tabular}
\end{table}

\vspace{-1ex}
\paragraph{When $\Delta>0$ under distribution shift, optimized mixtures can
advance.}
An exogenous gap arises when the candidate pool and target follow different distributions. We instantiate this setting by holding out CommonCrawl as the target while training on the remaining six domains, yielding $\Delta\approx1.26$. Although CommonCrawl is absent from the training pool, the related C4 domain provides a feasible source for partially correcting the gap. As shown in \Cref{tab:mixture_with_gap}, every optimized mixture substantially outperforms Uniform, reducing target perplexity from 49.19 to 40.2--41.1. By shifting training exposure toward domains better aligned with the held-out target, curation retains a large gain through the largest feasible training budget, with no sign of the advantage shrinking. In contrast to the no-gap gains that vanish with training, this persistence provides empirical evidence of advancement rather than transient acceleration.

\vspace{-1ex}
\paragraph{When $\Delta>0$ under limited data, optimized mixtures can
advance.}
Repeated sampling of data-scarce domains produces an endogenous train--target
gap: overfitting raises their target losses relative to their training losses,
thereby increasing $\Delta$. We instantiate this setting by restricting the
SlimPajama pool to 300M tokens while preserving its original domain
distribution. Under the fixed training budget, small domains such as ArXiv and
Wikipedia are exhausted early and repeatedly sampled, yielding
$\Delta=0.64$. As shown in \Cref{tab:mixture_with_gap}, all optimized mixtures
outperform Uniform at the largest tested budget by reducing exposure to these
overfit domains. Together with the exogenous-gap result, this provides further
empirical evidence that a correctable gap can enable advancement regardless
of whether it arises from distribution shift or limited data
(right panel of \Cref{fig:mixture_results}).

\begin{wraptable}{r}{0.56\linewidth}
\caption{Scale study comparing Uniform and TANDEM. Gain is Uniform $-$ TANDEM.}
\label{tab:mixture_scale}
\centering
\small
\setlength{\tabcolsep}{3pt}
\begin{tabular}{@{}llccc@{}}
\toprule
Scenario & Model & Uniform & TANDEM & Gain \\ \midrule
\multirow{2}{*}{\shortstack[l]{No gap\\$\Delta=0.07$}}
 & 160M & 19.71 & 19.66 & $+0.05$ \\
 & 1.1B & 17.16 & 17.05 & $+0.11$ \\ \midrule
\multirow{2}{*}{\shortstack[l]{Endogenous gap\\$\Delta=0.64$}}
 & 160M & 28.60 & 26.02 & $+2.58$ \\
 & 1.1B & 28.14 & 25.85 & $+2.29$ \\ \midrule
\multirow{2}{*}{\shortstack[l]{Exogenous gap\\$\Delta=1.26$}}
 & 160M & 49.19 & 40.24 & $+8.95$ \\
 & 1.1B & 40.03 & 34.72 & $+5.31$ \\ \bottomrule
\end{tabular}
\end{wraptable}

\paragraph{The $\Delta$-test remains predictive across model scales.}
We repeat all three mixture settings at 1.1B parameters and compare TANDEM
with Uniform (\Cref{tab:mixture_scale}). At 160M and 1.1B, respectively, the
no-gap gains remain negligible ($+0.05$ and $+0.11$ perplexity), whereas the
exogenous-gap gains remain substantial ($+8.95$ and $+5.31$), as do the
endogenous-gap gains ($+2.58$ and $+2.29$). Thus, the diagnosis obtained at
160M transfers to 1.1B, supporting the use of a compute-efficient proxy model
to screen for advancement potential before full-scale training
(\Cref{sec:delta_test}).

\subsection{Selection}
\label{sec:exp-selection}

To test whether persistent selection gains require a correctable mismatch, we use a paired design with no-gap and exogenous-gap scenarios. The two scenarios share the same candidate pool, model, selection methods, and training protocol. Only the target distribution used for selection and evaluation changes. This design isolates the role of the gap rather than merely comparing selection algorithms. We use a 5B-token OpenWebMath candidate pool~\citep{paster2024openwebmath} and TinyLlama-1.1B~\citep{zhang2024tinyllama}\footnote{We choose TinyLlama over a heavily optimized SOTA model to prevent saturated capabilities from masking the intrinsic efficacy of data selection.}. We report held-out target loss and accuracy on eight benchmarks spanning arithmetic, competition mathematics, word problems, table-based reasoning, and STEM knowledge. Baselines include full-pool continuous pretraining (Base-CT), random selection, three target-agnostic methods (EL2N~\citep{mansheej21datadiet}, InfoBatch~\citep{qin2024infobatch}, and Token-Cleaning~\citep{pang2025tokencleaning}), and two target-aware methods (RHO-1~\citep{lin2024rho1} and BLADE~\citep{wang2026blade}). Appendix~\ref{app:real_data_details} provides the complete benchmark list, method descriptions, and experimental protocol.

\begin{table}[]
\caption{Data-selection results for an in-distribution OpenWebMath target (no gap, $\Delta=0.016$, upper) and a shifted MetaMath--Mammoth target (with gap, $\Delta=0.549$, lower). We report accuracy on eight math benchmarks, their average, and held-out target loss. All methods operate on the same 5B-token candidate pool with a retained fraction of $f=0.6$. Target-aware RHO-1 and BLADE improve substantially only when a gap is present.}
\label{tab:sota_math}
\setlength\tabcolsep{2.8pt}
\centering
\small
\resizebox{\linewidth}{!}{%
\begin{tabular}{@{}lccccccccccc@{}}
\toprule
                & Retained & \multirow{2}{*}{GSM8K} & \multirow{2}{*}{MATH} & \multirow{2}{*}{SVAMP} & \multirow{2}{*}{ASDiv} & \multirow{2}{*}{MAWPS} & \multirow{2}{*}{TAB} & \multirow{2}{*}{MQA} & MMLU- & \multirow{2}{*}{Avg.} & Target \\
                & Toks. &                         &                       &                        &                        &                        &                      &                      & STEM  &                     & Loss   \\ \midrule
\multicolumn{12}{c}{Selection without Gap}                                                                                                                                                                                                                                                                  \\ \midrule
Base            & -                               & 2.9                      & 3.2                      & 11.0                      & 18.1                      & 20.4                      & 12.5                    & 14.6                    & 16.1                          &      12.4 & 1.97              \\
Base-CT         & 5B                             & 4.3  & 4.3    & 17.1   & 29.4   & 39.4   & 14.5  & 8.3   & 20.4        & 17.21 & 1.70                    \\
Random          & 3B                              & 4.0    & 3.9    & 15.8   & 29.1   & 37.2   & 13.8  & 9.4   & 17.8        & 16.38 & 1.73                    \\
EL2N            & 3B  & 3.0    & 3.2    & 15.5   & 25.0   & 31.2   & 14.4  & 8.9   & 18.3        & 14.94 & 1.76                    \\
InfoBatch       & 3B  & 3.8    & 3.3    & 15.8   & 25.8   & 32.9   & 12.5  & 7.1   & 16.5        & 14.71 & 1.76                    \\
Token Cleaning  & 3B                        & 4.2    & 3.0    & 15.4   & 24.4   & 31.7   & 11.7  & 10.2   & 14.1        & 14.30 & 1.79                     \\
RHO-1 & 3B  & 3.2    & 3.1    & 15.3   & 24.6   & 31.4   & 14.5  & 7.1   & 16.7        & 14.49 & 1.78                    \\
BLADE           & 3B     & 4.5    & 3.7    & 18.1   & 29.9   & 37.9   & 16.5  & 10.4   & 19.3        & 17.54 & 1.73                    \\
\midrule \midrule
\multicolumn{12}{c}{Selection with Gap}                                                                                                                                                                                                                                                                     \\ \midrule
Base            & -                               & 2.9                      & 3.2                      & 11.0                      & 18.1                      & 20.4                      & 12.5                    & 14.6                    & 16.1                          &      12.40 & 1.26              \\
Base-CT         & 5B                             & 4.3    & 4.3    & 17.1   & 29.4   & 39.4   & 14.5  & 8.3   & 20.4        & 17.21 & 1.14                   \\
Random          & 3B                              & 4.0    & 3.9    & 15.8   & 29.1   & 37.2   & 13.8  & 9.4   & 17.8        & 16.38 & 1.13                    \\
EL2N            & 3B  & 3.0    & 3.2    & 15.5   & 25.0   & 31.2   & 14.4  & 8.9   & 18.3        & 14.94 & 1.19                    \\
InfoBatch       & 3B  & 3.8    & 3.3    & 15.8   & 25.8   & 32.9   & 12.5  & 7.1   & 16.5    & 14.71 & 1.16                    \\
Token Cleaning  & 3B                        & 4.2    & 3.0    & 15.4   & 24.4   & 31.7   & 11.7  & 10.2   & 14.1        & 14.30 & 1.22                    \\
RHO-1 & 3B    & 15.1   & 5.9    & 32.6   & 46.9   & 63.8   & 18.7  & 15.1   & 18.4        & 27.06 & 0.59                     \\
BLADE           & 3B     & 21.2    & 8.1    & 37.0   & 51.4   & 70.2   & 21.2  & 20.6   & 20.8        & 31.31 & 0.45                    \\
\bottomrule
\end{tabular}
}
\end{table}

\begin{figure}[t]
\centering
\begin{minipage}{0.47\linewidth}
    \centering
    \includegraphics[width=0.98\linewidth]{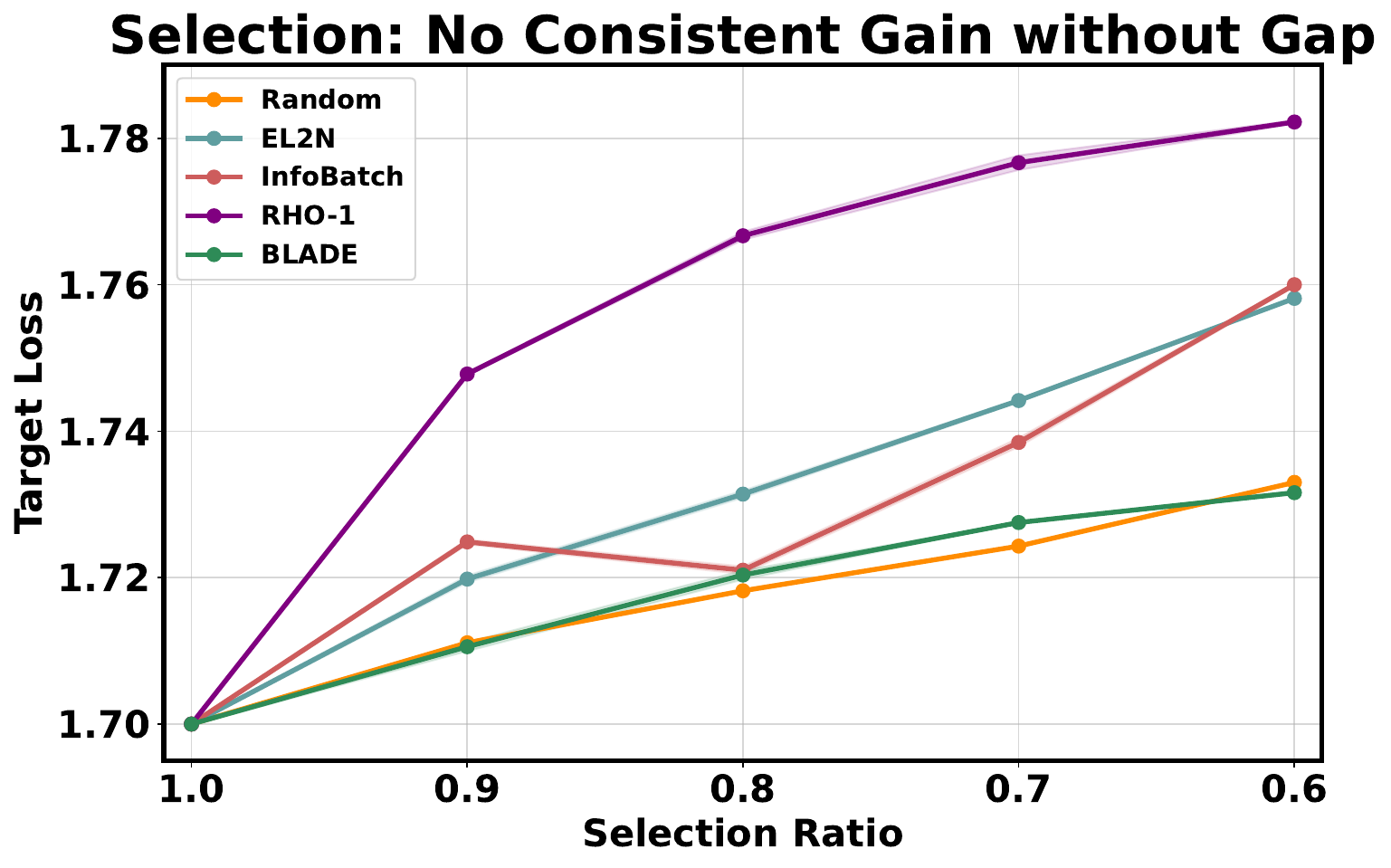}
\end{minipage}
\begin{minipage}{0.45\linewidth}
    \centering
    \includegraphics[width=1.0\linewidth]{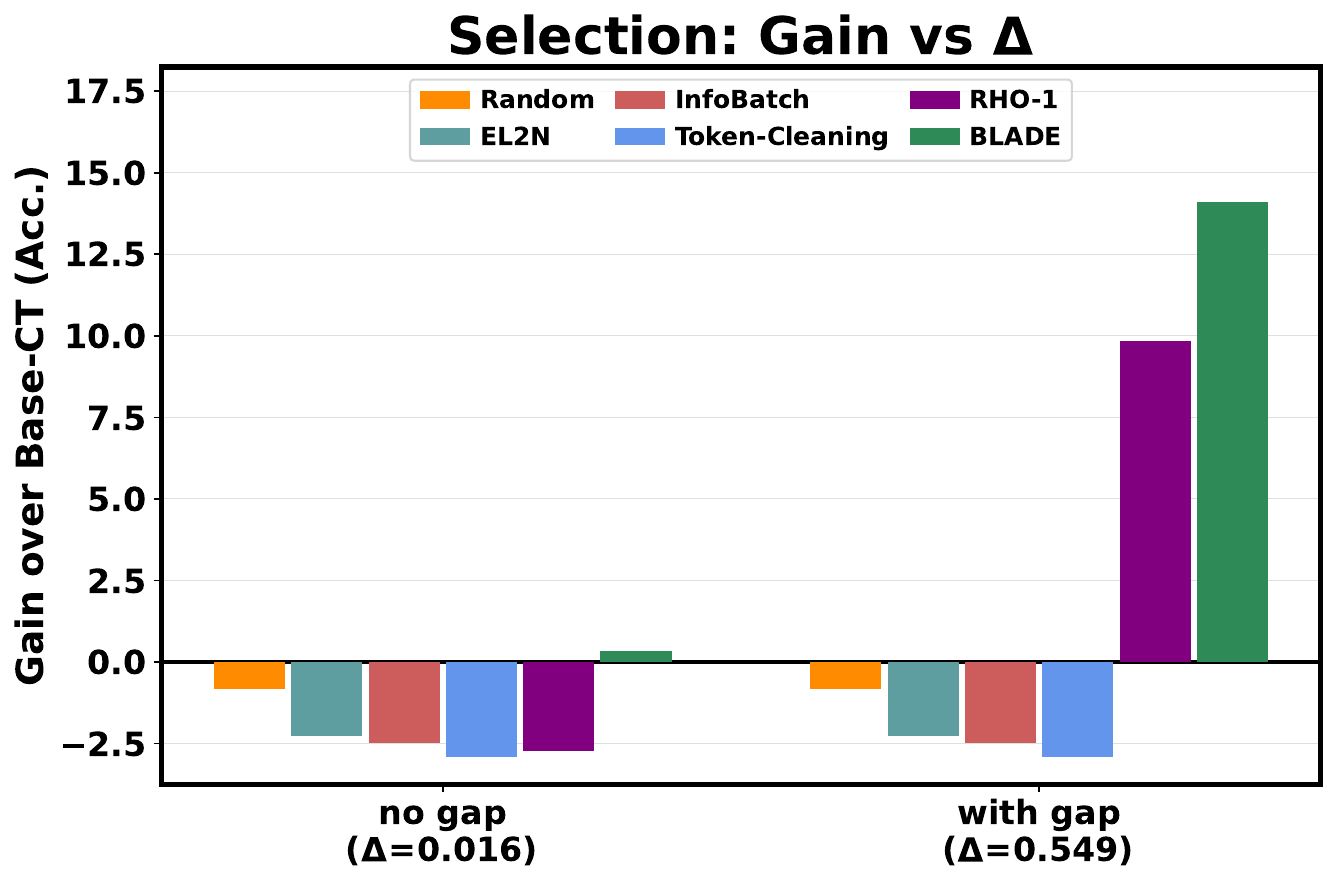}
\end{minipage}
\caption{\emph{Left:} Held-out target loss across retained fractions \(f\in[0.6,1.0]\) for an in-distribution OpenWebMath target (\(\Delta=0.016\)). Target loss directly evaluates the distribution used to define $\Delta$. No selector consistently improves over Base-CT and Random, and more aggressive pruning generally increases loss. \emph{Right:} Average accuracy change relative to Base-CT in the no-gap and shifted-target settings. The comparison shows little optimization headroom in the no-gap setting, whereas substantial positive gains become possible after the target shift ($\Delta=0.549$).}
\label{fig:selection_results}
\end{figure}

\vspace{-1ex}
\paragraph{When $\Delta\approx0$, selection provides no consistent
advancement.}
Using an additional 0.3B-token sample from OpenWebMath as the target yields a negligible train--target gap ($\Delta=0.016$). With little discrepancy to correct, neither target-aware nor target-agnostic methods consistently outperform the Base-CT (17.21) and Random (16.38) baselines: their average scores range from 14.3 to 17.5, and their held-out target losses show no consistent improvement (\Cref{tab:sota_math}, upper). Notably, RHO-1 and BLADE receive target data but still fail to provide consistent gains, showing that target access alone does not guarantee advancement. \Cref{fig:selection_results} (Left) further examines the effect of the retained fraction. We report held-out target loss rather than downstream accuracy because it is measured on the target distribution used to define $\Delta$, and thus more directly exposes the train--target gap. Varying $f$ from $1.0$ to $0.6$ yields the same conclusion: full retention attains the lowest loss, no selection method consistently improves over Random, and target loss generally rises as selection becomes more aggressive. This retention sweep is consistent with the limited optimization headroom indicated by the negligible gap.

\vspace{-1ex}
\paragraph{When $\Delta>0$, selection can advance.}
When the target departs from the candidate pool, an exogenous gap emerges. Following \citet{lin2024rho1}, we use a blend of MetaMath~\citep{yu2024metamath} and Mammoth~\citep{yu22024mammoth} as the target while retaining OpenWebMath as the candidate pool,\footnote{This open-source blend is a subset of RHO-1's target data. The remainder has not been released.} yielding a substantial mismatch ($\Delta=0.549$). As shown in \Cref{tab:sota_math} (bottom), target-aware selection produces large gains: RHO-1 and BLADE increase the Base-CT average score from 17.21 to 27.06 and 31.31, respectively, whereas the target-agnostic methods EL2N, InfoBatch, and Token-Cleaning do not. Held-out target losses show the same pattern. Together with the no-gap control, this contrast provides empirical evidence that target-aware selection can achieve advancement by correcting a genuine train--target gap.

\section{Related Work}
\label{sec:related}

\paragraph{Data mixture optimization.}
Beyond hand-tuned compositions, mixture methods learn domain weights via mixing laws~\citep{liu2025regmix, chen2023skillit, chen2025aioli} or bi-level optimization~\citep{fan2024doge, wang2025tandem}. Empirical comparisons have also found Uniform to remain a competitive baseline in several settings~\citep{chen2025aioli}. Our analysis offers one explanation for this pattern: when the baseline training distribution is aligned with the target, Uniform is endpoint-optimal, a finite-budget advantage reflects acceleration.

\paragraph{Data selection and quality scores.}
Data selection methods generally fall into \emph{target-agnostic} and \emph{target-aware} paradigms. \emph{Target-agnostic} scores rank examples by intrinsic difficulty or quality signals~\citep{mansheej21datadiet, wang2023dynaMS, qin2024infobatch, pang2025tokencleaning}. Nevertheless, they have a mixed record, with reports that random selection remains highly competitive in the data-rich regime~\citep{xia2024random}. Through our lens, such difficulty proxies only accelerate learning when no train--target gap exists. Conversely, \emph{target-aware} methods can genuinely advance the model by aligning with a target. This alignment can be \emph{implicit}, such as DCLM~\citep{li2024dclm} and DEITA~\citep{liu2024deita} using language models whose priors encode target distributions, or \emph{explicit}, where candidates are scored directly against a defined target~\citep{lin2024rho1, yu2024mates, wang2026blade}. These methods provide advancement by successfully reducing a genuine gap.

\paragraph{Solvable models of data curation.}
Another line of research explores data curation through solvable statistical-mechanics models, using the teacher--student perceptron and replica method~\citep{engel2001statisticalml, bruno2021learningcurve, sorscher2022beyondsl}. Closest to our setup, \citet{dohmatob2025less} derive exact scaling laws for data selection under distribution shift to characterize when "less is more". While we share this solvable-model viewpoint, our approach differs in key aspects. First, we directly formulate the classification objective rather than relying on a mean squared error (MSE) surrogate. Second, we explicitly formalize the distinction between \emph{acceleration} and \emph{advancement}. Finally, we translate these insights and advancement prerequisites into an observable empirical diagnostic (the $\Delta$-test), unifying both mixture and selection under a single framework, and validated on real LLMs.
\section{Conclusion}
\label{sec:conclusion}

We show that the value of data curation hinges on a fundamental distinction: whether a strategy genuinely advances attainable target performance or merely accelerates training. Advancement requires a correctable mismatch between the training baseline and the target. When the baseline training and target objectives coincide, the uncurated baseline is endpoint-optimal, so curation can at most accelerate. This helps explain why uniform mixtures and random selection remain difficult to beat in data-rich, in-distribution regimes. Practically, our lightweight $\Delta$-test diagnoses advancement potential from a baseline run before expensive optimization. A near-zero $\Delta$ suggests prioritizing scaling over curation, whereas an appreciably nonzero $\Delta$ signals potential headroom for target-aware optimization. Ultimately, our findings recenter curation on the task definition itself. Given the asymmetry between massive candidate training pools and compact, controllable target sets, constructing diverse, high-fidelity targets is not merely an evaluation afterthought, but a highly leveraged design choice in shaping LLM capabilities.

\bibliography{iclr2027_conference}

@inproceedings{yu2024metamath,
title={MetaMath: Bootstrap Your Own Mathematical Questions for Large Language Models},
author={Longhui Yu and Weisen Jiang and Han Shi and Jincheng YU and Zhengying Liu and Yu Zhang and James Kwok and Zhenguo Li and Adrian Weller and Weiyang Liu},
booktitle={International Conference on Learning Representations},
year={2024}
}

@inproceedings{sorscher2022beyondsl,
title={Beyond neural scaling laws: beating power law scaling via data pruning},
author={Ben Sorscher and Robert Geirhos and Shashank Shekhar and Surya Ganguli and Ari S. Morcos},
booktitle={Advances in Neural Information Processing Systems},
year={2022},
}

@inproceedings{engel2001statisticalml,
title={Statistical Mechanics of Learning},
author={A. Engel and C. Van den Broeck},
booktitle={Cambridge University Press},
year={2022},
}

@inproceedings{bruno2021learningcurve,
title={Learning curves of generic features maps for realistic datasets with a teacher-student model},
author={Bruno Loureiro and Cedric Gerbelot and Hugo Cui and Sebastian Goldt and Florent Krzakala and Marc Mezard and Lenka Zdeborova},
booktitle={Advances in Neural Information Processing Systems},
year={2021},
}

@inproceedings{xie2023doremi,
title={DoReMi: Optimizing Data Mixtures Speeds Up Language Model Pretraining},
author={Sang Michael Xie and Hieu Pham and Xuanyi Dong and Nan Du and Hanxiao Liu and Yifeng Lu and Percy Liang and Quoc V Le and Tengyu Ma and Adams Wei Yu},
booktitle={Neural Information Processing Systems},
year={2023},
}

@inproceedings{liu2025regmix,
title={RegMix: Data Mixture as Regression for Language Model Pre-training},
author={Qian Liu and Xiaosen Zheng and Niklas Muennighoff and Guangtao Zeng and Longxu Dou and Tianyu Pang and Jing Jiang and Min Lin},
booktitle={International Conference on Learning Representations},
year={2025}
}

@inproceedings{fan2024doge,
title={DOGE: Domain Reweighting with Generalization Estimation},
author={Simin Fan and Matteo Pagliardini and Martin Jaggi},
booktitle={International Conference on Machine Learning},
year={2024},
}

@inproceedings{chen2023skillit,
title={Skill-It! A Data-Driven Skills Framework for Understanding and Training Language Models},
author={Mayee F. Chen and Nicholas Roberts and Kush Bhatia and Jue Wang and Ce Zhang and Frederic Sala and Christopher Ré},
booktitle={Neural Information Processing Systems},
year={2023}
}

@inproceedings{chen2025aioli,
title={Aioli: A unified optimization framework for language model data mixing},
author={Mayee F. Chen and Michael Y. Hu and Nicholas Lourie and Kyunghyun Cho and Christopher Ré},
booktitle={International Conference on Learning Representations},
year={2025},
}

@inproceedings{wang2025tandem,
title={TANDEM: Bi-Level Data Mixture Optimization with Twin Networks},
author={Jiaxing Wang and Deping Xiang and Jin Xu and Mingyang Yi and Guoqiang Gong and Zicheng Zhang and Haoran Li and Pengzhang Liu and Zhen Chen and Ke Zhang and Ju Fan and Qixiang Jiang},
booktitle={Neural Information Processing Systems},
year={2025},
}

@InProceedings{biderman2023pythia,
  title = 	 {Pythia: A suite for analyzing large language models across training and scaling},
  author =       {Stella Biderman and Hailey Schoelkopf and Quentin Gregory Anthony and Herbie Bradley and Kyle O’Brien and Eric Hallahan and Mohammad Aflah Khan and Shivanshu Purohit and USVSN Sai Prashanth and Edward Raff eta.al.},
  booktitle = 	 {International Conference on Machine Learning},
  year = 	 {2023}
}

@article{soboleva2023slimpajama,
  author       = {Daria Soboleva and Faisal Al-Khateeb and Joel Hestness and Nolan Dey
Opentensor: Robert Myers, Jacob Robert Steeves},
  title        = {SlimPajama: A 627B token, cleaned and deduplicated version of RedPajam},
  journal = {https://www.cerebras.ai/blog/slimpajama-a-627b-token-cleaned-and-deduplicated-version-of-redpajama},
  year         = {2023}
}

@article{xia2024random,
  author    = {Tingyu Xia and Bowen Yu and Kai Dang and An Yang and Yuan Wu and Yuan Tian and Yi Chang and Junyang Lin},
  title     = {Rethinking Data Selection at Scale: Random Selection is Almost All You Need},
  journal   = {arXiv preprint},
  volume    = {arXiv:2410.09335},
  year      = {2024}
}

@inproceedings{lin2024rho1,
title={RHO-1: Not All Tokens Are What You Need},
author={Zhenghao Lin and Zhibin Gou and Yeyun Gong and Xiao Liu and Yelong Shen and Ruochen Xu and Chen Lin and Yujiu Yang and Jian Jiao and Nan Duan and Weizhu Chen},
booktitle={Neural Information Processing Systems},
year={2024},
}

@inproceedings{yu2024mates,
title={MATES: Model-Aware Data Selection for Efficient Pretraining with Data Influence Models},
author={Zichun Yu and Spandan Das and Chenyan Xiong},
booktitle={Neural Information Processing Systems},
year={2024},
}

@inproceedings{pang2025tokencleaning,
  title = 	{Token Cleaning: Fine-Grained Data Selection for LLM Supervised Fine-Tuning},
  author =    {Jinlong Pang and Na Di and Zhaowei Zhu and Jiaheng Wei and Hao Cheng and Chen Qian and Yang Liu},
  booktitle = {International Conference on Machine Learning},
  year = 	 {2025}
}

@article{wang2026blade,
  title={BLADE: Scalable Bi-level Adaptive Data Selection for LLM Training},
  author={Jiaxing Wang and Deping Xiang and Jin Xu and Zirui Liu and Zicheng Zhang and Guoqiang Gong and Jun Fang and Chao Liu and Pengzhang Liu and Tongxuan Liu and Ke Zhang and Qixia Jiang},
  journal={arXiv preprint arXiv:2606.18650},
  year={2026}
}

@inproceedings{liu2024deita,
  title={What makes good data for alignment? a comprehensive study of automatic data selection in instruction tuning},
  author={Liu, Wei and Zeng, Weihao and He, Keqing and Jiang, Yong and He, Junxian},
  booktitle={International Conference on Learning Representations},
  year={2024}
}

@inproceedings{yu22024mammoth,
title={Mammoth: Building Math Generalist Models Through Hybrid Instruction Tuning},
author={Xiang Yue and Xingwei Qu and Ge Zhang and Yao Fu and Wenhao Huang and Huan Sun and Yu Su and and Wenhu Chen},
booktitle={International Conference on Learning Representations},
year={2024},
}

@article{cobbe2021gsm8k,
  author       = {Karl Cobbe and Vineet Kosaraju and Mohammad Bavarian and Mark Chen and Heewoo Jun and Lukasz Kaiser and Matthias Plappert and Jerry Tworek and Jacob Hilton and Reiichiro Nakano and Christopher Hesse and John Schulman},
  title        = {Training verifiers to solve math word problems},
  journal      = {arXiv preprint},
  volume       = {arXiv:2110.14168},
  year         = {2021}
}

@inproceedings{hendrycks2021math,
title={Measuring mathematical problem solving with the math dataset},
author={Dan Hendrycks and Collin Burns and Saurav Kadavath and Akul Arora and Steven Basart and Eric Tang and Dawn Song and Jacob Steinhardt},
booktitle={Neural Information Processing Systems},
year={2021},
}

@inproceedings{patel2021svamp,
title={Are NLP models really able to solve simple math word problems?},
author={Arkil Patel and Satwik Bhattamishra and Navin Goyal},
booktitle={Conference of the North American Chapter of the Association for Computational Linguistics},
year={2021},
}

@InProceedings{miao2020asdiv,
  title = 	 {A diverse corpus for evaluating and developing English math word problem solvers},
  author =       {Shenyun Miao and Chaochun Liang and Keh-Yih Su},
  booktitle = 	 {Association for Computational Linguistics},
  year = 	 {2020}
}

@InProceedings{kedziorski2016mawps,
  title = 	 {MAWPS: A math word problem repository},
  author =       {Rik Koncel Kedziorski and Subhro Roy and Aida Amini and Nate Kushman and Hannaneh Hajishirzi},
  booktitle = 	 {Conference of the North American Chapter of the Association for Computational Linguistics},
  year = 	 {2016}
}

@inproceedings{lu2023tab,
title={Dynamic prompt learning via policy gradient for semi-structured mathematical reasoning},
author={Pan Lu and Liang Qiu and Kai-Wei Chang and Ying Nian Wu and Song-Chun Zhu and Tanmay Rajpurohit and Peter Clark and Ashwin Kalyan},
booktitle={International Conference on Learning Representations},
year={2023},
}

@article{amini2019mathqa,
  author       = {Aida Amini and Saadia Gabriel and Peter Lin and Rik Koncel-Kedziorski and Yejin Choi and Hannaneh Hajishirzi},
  title        = {MATHQA: Towards Interpretable Math Word Problem Solving with Operation-based Formalisms},
  journal      = {arXiv preprint},
  volume       = {arXiv:1905.13319},
  year         = {2019}
}

@article{hendrycks2020mmlu,
  author       = {Dan Hendrycks and Collin Burns and Steven Basart and Andy Zou and Mantas Mazeika and Dawn Song and Jacob Steinhardt},
  title        = {Measuring Massive Multitask Language Understanding},
  journal      = {arXiv preprint},
  volume       = {arXiv:2009.03300},
  year         = {2020}
}

@inproceedings{paster2024openwebmath,
title={OpenWebMath: An Open Dataset of High-Quality Mathematical Web Text.},
author={Keiran Paster and Marco Dos Santos and Zhangir Azerbayev and Jimmy Ba},
booktitle={International Conference on Learning Representations},
year={2024},
}

@article{zhang2024tinyllama,
  author       = {Peiyuan Zhang and Guangtao Zeng and Tianduo Wang and Wei Lu},
  title        = {Tinyllama: An open-source small language model},
  journal      = {arXiv preprint},
  volume       = {arXiv:2401.02385},
  year         = {2024}
}

@inproceedings{mansheej21datadiet,
 author = {Paul, Mansheej and Ganguli, Surya and Dziugaite, Gintare Karolina},
 booktitle = {Advances in Neural Information Processing Systems},
 editor = {M. Ranzato and A. Beygelzimer and Y. Dauphin and P.S. Liang and J. Wortman Vaughan},
 pages = {20596--20607},
 publisher = {Curran Associates, Inc.},
 title = {Deep Learning on a Data Diet: Finding Important Examples Early in Training},
 volume = {34},
 year = {2021}
}

@inproceedings{qin2024infobatch,
  title={InfoBatch: Lossless Training Speed Up by Unbiased Dynamic Data Pruning},
  author={Qin, Ziheng and Wang, Kai and Zheng, Zangwei and Gu, Jianyang and Peng, Xiangyu and Zhaopan Xu and Zhou, Daquan and Lei Shang and Baigui Sun and Xuansong Xie and You, Yang},
  booktitle={International Conference on Learning Representations},
  year={2024},
}

@inproceedings{li2024dclm,
title={DataComp-LM: In search of the next generation of training sets for language models},
author={Jeffrey Li and Alex Fang and Georgios Smyrnis et.al.},
booktitle={Advances in Neural Information Processing Systems},
year={2024},
}

@inproceedings{dohmatob2025less,
  title={Why Less is More (Sometimes): A Theory of Data Curation},
  author={Elvis Dohmatob and Mohammad Pezeshki and Reyhane Askari-Hemmat},
  booktitle={International Conference on Learning Representations},
  year={2026},
}

@inproceedings{wang2023dynaMS,
  title={DynaMS: Dynamic Margin Selection for Efficient Deep Learning },
  author={Jiaxing Wang and Yong Li and Jingwei Zhuo and Xupeng Shi and Weizhong Zhang and Lixing Gong and Tong Tao and Pengzhang Liu and Yongjun Bao and Weipeng Yan},
  booktitle={International Conference on Learning Representations},
  year={2023},
}
\bibliographystyle{iclr2027_conference}

\appendix
\newpage
\section{Appendix}
\label{sec:appendix}

\subsection{Two-Teacher Mixture with Heterogeneous Difficulty}
\label{app:mixture}

We instantiate data mixture in a two-domain teacher--student model. A mixture $\bw$ determines how the total of $P$ training examples is allocated across the two domains, with $P_k=w_kP$. All examples are then used jointly to fit one regularized logistic-regression student. We characterize its learning curves using the multi-environment state-evolution framework of~\citet{bruno2021learningcurve}.

\paragraph{Heterogeneous-difficulty setup.}
We consider two domains with orthogonal teachers $\bB_1,\bB_2\in\mathbb R^N$ satisfying $\|\bB_1\|=\|\bB_2\|=\sqrt N$ and $\bB_1^\top\bB_2=0$. The training set contains $P=\alpha N$ examples. A mixture policy $\bw=(w_1,w_2)\in\mathcal W:= \{\bw\in\mathbb R^K\mid\sum_{k=1}^K w_k=1,\ w_k>0\}$ assigns $P_k=w_kP$ examples to domain $k$, giving the domain-specific sample ratio $\alpha_k=P_k/N=w_k\alpha$; here $K=2$. Examples from domain $k$ are generated according to $\bx_i^k\sim\mathcal N(\mathbf0,\mathbf\Sigma_k)$ and $y_i^k=\operatorname{sign}(\bB_k^\top\bx_i^k)$. Both domains share a student $\bJ\in\mathbb R^N$ with normalized field $h(\bJ,\bx)=\bJ^\top\bx/\sqrt N$. Define the logistic margin loss $\ell_{\log}(r)=\log(1+e^{-r})$. The trained student is
\begin{equation}
\bJ^\star(\bw) = \arg\min_{\bJ} \left\{ \sum_{k=1}^{2}\sum_{i=1}^{P_k} \ell_{\log}\!\left( y_i^k\frac{\bJ^\top\bx_i^k}{\sqrt N} \right) + \frac{\lambda_{\mathrm{reg}}}{2}\|\bJ\|^2 \right\},
\label{eq:mixture-logistic}
\end{equation}
where $\lambda_{\mathrm{reg}}>0$ is the regularization coefficient. Domain 1 is isotropic, with $\mathbf\Sigma_1=\mathbf I$. We make domain 2 harder by introducing nuisance variance orthogonal to its teacher:
\begin{equation}
\mathbf\Sigma_2 = \frac{\bB_2\bB_2^\top}{N} + (1+\nu^2) \left( \mathbf I-\frac{\bB_2\bB_2^\top}{N} \right),
\label{eq:sigma2}
\end{equation}
where $\nu\geq0$ controls the nuisance level. Since $\bB_2\bB_2^\top/N$ projects onto the teacher direction, $\mathbf\Sigma_2$ has eigenvalue $1$ along $\bB_2$ and eigenvalue $1+\nu^2$ in every orthogonal direction. The nuisance therefore leaves the labeling rule and Bayes boundary unchanged while increasing the finite-sample difficulty of estimating $\bB_2$.

\paragraph{State-evolution structure.}
We summarize the trained student using a finite set of order parameters. The overlap $m_k=\bJ^\top\bB_k/N$ measures the alignment of the student with the teacher of domain $k$, while $q=\|\bJ\|^2/N$ is its normalized squared norm. We further introduce the response parameter $V_k$, which measures how sensitively the fitted student changes under a small perturbation to examples from domain $k$. To enforce the definitions of these order parameters in the saddle-point derivation, we introduce the associated dual variables $\widehat m_k$, $\widehat V_k$, and $\widehat q_k$, analogous to Lagrange multipliers. These are auxiliary quantities rather than additional model parameters. The resulting self-consistent equations consist of two coupled parts. The loss-side equations determine
$(\widehat m_k,\widehat V_k,\widehat q_k)$ from $(m_k,V_k,q)$ through the domain-specific logistic loss, while the model-side equations map the dual variables back to $(m_k,V_k,q)$ through the domain covariances. A joint fixed point of these equations determines the asymptotic properties of the trained student.

\paragraph{Loss-side equations.}
We specialize the general state-evolution equations of \citet{bruno2021learningcurve} to the logistic loss and the two-domain covariance structure defined above. For domain $k$, let $Q_k$ denote the variance of the effective student field. The covariance structure gives $Q_1=q$ and
\begin{equation}
Q_2 = q+\nu^2(q-m_2^2).
\label{eq:qeff}
\end{equation}
The additional term in $Q_2$ arises from the component of $\bJ$ orthogonal to $\bB_2$, whose variance is amplified by the nuisance level $\nu$. Let $V_k^\star=1-m_k^2/Q_k$ denote the conditional variance of the teacher field given the student field in domain $k$. Specializing the loss-side saddle-point conditions yields
\begin{align}
\widehat m_k &= \frac{\alpha_k}{V_k} \mathcal I_m(m_k,Q_k,V_k,V_k^\star),
\label{eq:hat-m}
\\
\widehat V_k &= \alpha_k \left[ \frac{1}{V_k} - \frac{\mathcal I_V(m_k,Q_k,V_k,V_k^\star)}{V_k^2} \right],
\label{eq:hat-V}
\\
\widehat q_k &= \frac{\alpha_k}{V_k^2} \mathcal I_Q(m_k,Q_k,V_k,V_k^\star),
\label{eq:hat-q}
\end{align}
where $\mathcal I_m$, $\mathcal I_V$, and $\mathcal I_Q$ are scalar Gaussian expectations induced by the logistic loss and its proximal operator. Their explicit expressions are provided below. The domain difficulty first enters these equations through $Q_2$. By changing the effective field distribution of domain $2$, the nuisance variance modifies its conjugate parameters and, consequently, its finite-sample learning dynamics.

\paragraph{Model-side update and spectral decomposition.}
The conjugate responses define the effective precision operator
\begin{equation}
\mathbf D = \lambda_{\mathrm{reg}}\mathbf I
+
\widehat V_1\mathbf\Sigma_1 + \widehat V_2\mathbf\Sigma_2.
\label{eq:effective-precision}
\end{equation}
The model-side update requires its inverse $\mathbf D^{-1}$. We therefore decompose the parameter space into $\operatorname{span}(\bB_1)$, $\operatorname{span}(\bB_2)$, and $\mathcal B=\operatorname{span}(\bB_1,\bB_2)^\perp$. The $(N-2)$-dimensional subspace $\mathcal B$ contains directions orthogonal to both teachers and is referred to as the \emph{bulk}.

The covariance eigenvalues in these subspaces are
\begin{equation}
\begin{array}{c|ccc} &
\operatorname{span}(\bB_1) &
\operatorname{span}(\bB_2) &
\mathcal B
\\
\hline
\mathbf\Sigma_1 & 1 & 1 & 1 \\
\mathbf\Sigma_2 & 1+\nu^2 & 1 & 1+\nu^2
\end{array}.
\label{eq:mixture-spectrum}
\end{equation}
Consequently, the eigenvalues of $\mathbf D$ are
\begin{equation}
d_{\bB_1} = d_{\mathrm{bulk}} = \lambda_{\mathrm{reg}}+\widehat V_1+(1+\nu^2)\widehat V_2,
\qquad
d_{\bB_2} = \lambda_{\mathrm{reg}}+\widehat V_1+\widehat V_2.
\label{eq:denoms}
\end{equation}
The corresponding eigenvalues of $\mathbf D^{-1}$ are their reciprocals.

Using this decomposition, the model-side updates become
\begin{align}
V_1 &= \frac{1}{d_{\mathrm{bulk}}}, \qquad
V_2 = \frac{1+\nu^2}{d_{\mathrm{bulk}}},
\label{eq:Vk}
\\
m_1 &= \frac{\widehat m_1}{d_{\bB_1}}, \qquad
m_2 = \frac{\widehat m_2}{d_{\bB_2}},
\label{eq:mk}
\\
q &= \frac{\widehat m_1^2}{d_{\bB_1}^2} + \frac{\widehat m_2^2}{d_{\bB_2}^2} + \frac{\widehat q_1+(1+\nu^2)\widehat q_2}{d_{\mathrm{bulk}}^2}.
\label{eq:q-me}
\end{align}
Together, the loss-side and model-side updates form a closed finite-dimensional system for any mixture policy $\bw$ and sample ratio
$\alpha$.

\paragraph{Target readout.}
For a target example from domain $k$, the normalized student field can be written as
\begin{equation}
h_k = m_k u+s_k\sqrt{q-m_k^2}\,\eta, \qquad u,\eta\sim\mathcal N(0,1),
\label{eq:readout-me}
\end{equation}
where $s_1=1$ and $s_2=\sqrt{1+\nu^2}$. The first term is aligned with the teacher, while the second represents the orthogonal component of the student. The corresponding signal-to-noise ratio is
$\mathrm{SNR}_k=m_k/[s_k\sqrt{q-m_k^2}]$.

Let $\widehat{\bw}=(\widehat w_1,\widehat w_2)\in\mathcal W$ denote the domain proportions of the target. Its target loss is
\begin{equation}
\mathcal L_{\mathrm{tar}}(\bw;\widehat{\bw}) = \sum_{k=1}^{2} \widehat w_k\, \mathbb E_{u,\eta} \left[ \ell_{\log}\!\left(\operatorname{sign}(u)h_k\right) \right].
\label{eq:mixture-target}
\end{equation}
Thus, solving the closed state-evolution system determines the target loss of any training mixture $\bw$ against any target mixture $\widehat{\bw}$. Dependence on $\alpha$, $\nu$, and $\lambda_{\mathrm{reg}}$ is suppressed for readability.

\paragraph{Large-budget limit: an analytic target-matching argument.} 
Panel~(c) of \Cref{fig:mixture_ratio_evolve} is evaluated at a large but finite budget. We now identify its limiting optimum without extrapolating the numerical state evolution. Divide \eqref{eq:mixture-logistic} by $P=\alpha N$. If $\|\bJ\|^2=O(N)$ and $\lambda_{\mathrm{reg}}$ is fixed, the normalized ridge contribution is $O(\lambda_{\mathrm{reg}}/\alpha)$ and hence vanishes as $\alpha\to\infty$. Under convergence of the empirical domain risks, the limiting student therefore minimizes
\begin{equation}
\begin{aligned}
\bJ_\infty^\star(\bw) &\in\arg\min_{\bJ}\ \mathcal R_{\bw}(\bJ),
\\
\mathcal R_{\bw}(\bJ) &=\sum_{k=1}^{2} w_k\,\mathcal L^{\mathrm{pop},k}(\bJ),
\\
\mathcal L^{\mathrm{pop},k}(\bJ) &=\mathbb E_{\bx\sim\mathcal N(\mathbf0,\mathbf\Sigma_k)} \!\left[\ell_{\log}\!\left( \operatorname{sign}(\bB_k^\top\bx)h(\bJ,\bx)
\right)\right].
\end{aligned}
\label{eq:mixture-pop-min}
\end{equation}
The target population risk and the endpoint induced by a training mixture are
\begin{equation}
\mathcal R_{\widehat{\bw}}(\bJ) =\sum_{k=1}^{2}\widehat w_k\mathcal L^{\mathrm{pop},k}(\bJ),
\qquad
\mathcal E_\infty(\bw;\widehat{\bw}) =\mathcal R_{\widehat{\bw}}\!\left(\bJ_\infty^\star(\bw)\right).
\label{eq:mixture-target-pop-risk}
\end{equation}
Choosing $\bw=\widehat{\bw}$ makes the limiting training objective exactly the target objective. Consequently,
\begin{equation}
\mathcal E_\infty(\widehat{\bw};\widehat{\bw})
\leq
\mathcal E_\infty(\bw;\widehat{\bw}),
\qquad \forall\,\bw\in\mathcal W,
\label{eq:mixture-target-matching-optimal}
\end{equation}
so target matching is a global optimum. This argument is analytic and does not depend on the nuisance level $\nu$. The \emph{location in mixture space} of the matching optimum is therefore independent of difficulty, although the population minimizer and attained target loss generally still depend on $\nu$ through $\Sigma_2$.

If the target risk has a unique minimizer and the baseline induces a different population minimizer, the inequality in \eqref{eq:mixture-target-matching-optimal} is strict. These conditions hold in the nondegenerate two-teacher setting considered here: the Gaussian covariances are full rank, the mixed logistic population risk is strictly convex at its finite optimum, and distinct teachers do not share a stationary point of their individual risks. Two regimes follow:
\begin{itemize}[leftmargin=*,itemsep=1pt]
\item \emph{No gap} ($\widehat{\bw}=\bw_{\text{baseline}}$): matching returns the baseline, so $\mathcal E_\infty(\bw;\widehat{\bw})\ge \mathcal E_\infty(\bw_{\text{baseline}};\widehat{\bw})$ for all $\bw$ and no reallocation lowers the endpoint---acceleration only.
\item \emph{With a correctable composition gap}
($\widehat{\bw}\neq\bw_{\text{baseline}}$): the baseline minimizes a differently weighted population risk. Under the nondegeneracy conditions above, the two weightings induce different minimizers, and hence $\mathcal E_\infty(\widehat{\bw};\widehat{\bw})< \mathcal E_\infty(\bw_{\text{baseline}};\widehat{\bw})$---advancement.
\end{itemize}
Thus the numerical curve at $\alpha=100$ in panel~(c) approximates an exactly target-matched population optimum; it is not the sole evidence for that endpoint.

\paragraph{Consequences for acceleration and advancement.}
At finite sample ratios, the unequal field variances in \Cref{eq:qeff} cause the two domains to be learned at different rates. Allocating more data to the harder domain can therefore reduce target loss at small $\alpha$, as in \Cref{fig:mixture_ratio_evolve}(b). When the baseline already matches the target, this advantage vanishes with increasing budget and is acceleration. When the target composition differs from the baseline, \eqref{eq:mixture-target-matching-optimal} instead gives a strictly better population endpoint under the stated nondegeneracy conditions, which is the advancement effect in panel~(c).

\paragraph{Logistic integrals.}
For completeness, we specify the Gaussian expectations in \Cref{eq:hat-m,eq:hat-V,eq:hat-q}. Let $\omega=\sqrt Q\,\xi$ with $\xi\sim\mathcal N(0,1)$, $\omega_\star=(m/\sqrt Q)\xi$, and
\begin{equation}
a(\xi) = \operatorname{erf}\!\left(\frac{(m/\sqrt Q)\xi}{\sqrt{2V^\star}}\right).
\label{eq:teacher-weight}
\end{equation}
Define
\begin{equation}
\eta^\pm = \operatorname{prox}_{V\ell_{\log}(\pm\cdot)}(\omega) = \arg\min_r \left\{ \frac{(r-\omega)^2}{2V} + \ell_{\log}(\pm r) \right\}.
\label{eq:logistic-prox}
\end{equation}
The required integrals are
\begin{align}
\mathcal I_m &= \frac{1}{\sqrt{2\pi V^\star}} \mathbb E_\xi \left[ e^{-\frac{\omega_\star^2}{2V^\star}} (\eta^+-\eta^-) \right],
\label{eq:Im}
\\
\mathcal I_V &= \frac{1}{2}\mathbb E_\xi \left[\frac{1+a(\xi)}     {1/V+\frac14\operatorname{sech}^2(\eta^+/2)} + \frac{1-a(\xi)}     {1/V+\frac14\operatorname{sech}^2(\eta^-/2)}\right],
\label{eq:IV}
\\
\mathcal I_Q &= \frac{1}{2}\mathbb E_\xi \left[(1+a(\xi))(\eta^+-\omega)^2 + (1-a(\xi))(\eta^--\omega)^2 \right].
\label{eq:IQ}
\end{align}

\paragraph{Reduction checks and numerical solution.}
When $\nu=0$, the two domains have identical covariance, $d_{\bB_1}=d_{\bB_2}=d_{\mathrm{bulk}} =\lambda_{\mathrm{reg}}+\widehat V_1+\widehat V_2$, and $V_1=V_2$. The system then reduces to the isotropic two-teacher logistic state evolution. We solve the fixed-point equations using a nonuniform composite quadrature for the Gaussian expectations. At large sample ratios, $V^\star$ approaches zero and $a(\xi)$ becomes sharply concentrated, causing fixed Gauss--Hermite grids to under-resolve the integrands. Refining the grid around the shrinking transition layer avoids this numerical artifact. The isotropic reduction at $\nu=0$ and agreement with finite-dimensional simulations provide independent checks of the solution.

\subsection{Selection under Matched and Shifted Targets}
\label{app:imperfect}

Selection starts from $P=\alpha N$ candidate examples and retains a fraction
$f$, leaving a training budget $f\alpha N$. We first consider a matched target
and then extend the model to a shifted target, where label consistency provides
an additional selection signal. We write the target-error learning curve as
$\mathcal E_\alpha(f;\phi)$, where $\phi$ is the angle between the training and
target teachers; dependence on the fixed $\lambda_{\mathrm{sel}}$ in the
shifted-target policy is suppressed.

\subsubsection{Matched target: classical margin selection}
\label{app:single}
Let $\bB\in\mathbb R^N$, $\|\bB\|=\sqrt N$, be both the training and target teacher. The $P=\alpha N$ candidates are generated by $\bx_i\sim\mathcal N(\mathbf0,\mathbf I)$ and $y_i=\operatorname{sign}(\bB^\top\bx_i)$. Writing the teacher margin as $z=\bB^\top\bx/\sqrt N$ (and $z_i=\bB^\top\bx_i/\sqrt N$ for candidate $i$), margin selection retains $|z|<\gamma_f$, where
\begin{equation}
f=\int_{|z|<\gamma_f}\varphi(z)\,\d z 
\end{equation}
is the retained fraction and $\varphi$ is the standard Gaussian density. The selected set is $S_f=\{i:|z_i|<\gamma_f\}$. We use the spherical max-margin student
\begin{equation}
\bJ^\star \in \arg\max_{\|\bJ\|=\sqrt N} \min_{i\in S_f} \frac{y_i\bJ^\top\bx_i}{\sqrt N}, \qquad 
\kappa = \min_{i\in S_f} \frac{y_i(\bJ^\star)^\top\bx_i}{\sqrt N}.
\label{eq:app-single-student-compact}
\end{equation}
This has the same direction as the usual minimum-norm hard-margin solution; the spherical form makes the normalization used by the replica equations explicit.

Rather than repeating the replica calculation, we use the arbitrary-margin single-teacher equations of \citet[][Eqs.~66--67]{sorscher2022beyondsl}. For a truncated Gaussian margin distribution, they reduce to their Eqs.~68--69. The retained examples have sample ratio $f\alpha$ and conditional margin density
\begin{equation}
p_f(z) =\frac{\varphi(z)\mathbf 1\{|z|<\gamma_f\}}{f}.
\label{eq:selected-margin-density}
\end{equation}
In the high-dimensional limit, the empirical selection threshold and retained margin distribution concentrate at $\gamma_f$ and $p_f$, respectively. The cited equations depend on the product of the sample ratio and margin density. Substituting $(f\alpha,p_f)$ therefore gives $f\alpha p_f(z)=\alpha\varphi(z)\mathbf 1\{|z|<\gamma_f\}$, exactly the exposure induced by selecting from the candidate set. Thus the normalized student--teacher overlap $R=(\bJ^\star)^\top\bB/N$ and maximum stability $\kappa$ satisfy
\begin{align}
R &= 2f\alpha\int_{-\infty}^{\kappa}\!\d t \int_0^\infty\!\d z\, p_f(z) \frac{e^{-\frac{(t-Rz)^2}{2(1-R^2)}}} {\sqrt{2\pi(1-R^2)}}
 (\kappa-t) \frac{z-Rt}{1-R^2},
\label{eq:app-single-R}
\\[2pt]
1-R^2 &= 2f\alpha\int_{-\infty}^{\kappa}\!\d t \int_0^\infty\!\d z\, p_f(z) \frac{e^{-\frac{(t-Rz)^2}{2(1-R^2)}}} {\sqrt{2\pi(1-R^2)}} (\kappa-t)^2.
\label{eq:app-single-norm}
\end{align}
The factor of $2$ uses the symmetry between positive and negative teacher margins, allowing the $z$-integral to be restricted to $z>0$. The target error is $\mathcal E_\alpha(f;0)=\arccos(R)/\pi$. We solve \Cref{eq:app-single-R,eq:app-single-norm} by adaptive quadrature; $f=1$ recovers the standard unpruned perceptron equations. These curves give the matched-target selection result in \Cref{fig:mixture_ratio_evolve}(e).

\subsubsection{Shifted target: consistency-aware selection}
We now let the target teacher differ from the training teacher. Applying a target-margin rule alone is insufficient: labels still come from the training teacher, so examples near the target boundary may carry labels that conflict with the target. We therefore model the additional signal used by target-aware selectors through a consistency filter. This remains selection rather than label correction: inconsistent examples are removed, and no label is changed.

\paragraph{Setup and selection rule.}
Let the training and target teachers be $\bB$ and $\bB_{\mathrm{tar}}$, with $\|\bB\|=\|\bB_{\mathrm{tar}}\|=\sqrt N$ and
\begin{equation}
R_0=\frac{\bB^\top\bB_{\mathrm{tar}}}{N}=\cos\phi.
\end{equation}
We take $0<\phi<\pi/2$ in this shifted-target analysis; the matched endpoint
$\phi=0$ is handled separately above.
For an example $\bx$, define
\begin{equation}
u=\frac{\bB^\top\bx}{\sqrt N},\qquad
z=\frac{\bB_{\mathrm{tar}}^\top\bx}{\sqrt N},\qquad
y=\operatorname{sign}(u),
\end{equation}
so $(u,z)$ is jointly standard Gaussian with correlation $R_0$. Let $A=\{\operatorname{sign}(u)=\operatorname{sign}(z)\}$ denote label consistency. We rank candidates by the single target-aware score
\begin{equation}
q_{\lambda_{\mathrm{sel}}}(u,z) =|z|+\lambda_{\mathrm{sel}}\mathbf 1_{A^c}, \qquad \lambda_{\mathrm{sel}}>0.
\label{eq:consistency-selection-rule}
\end{equation}
When $R_0=1$, $A$ holds almost surely and this score reduces to the matched-target margin rule above. We retain the lowest-scoring fraction $f$. Equivalently, for a threshold $\tau_f$, a consistent example is retained when $|z|<\tau_f$, whereas an inconsistent one is retained only when $|z|<(\tau_f-\lambda_{\mathrm{sel}})_+$, where $(r)_+=\max\{r,0\}$. The threshold is determined by
\begin{equation}
f =2\int_0^{\tau_f}\varphi(z) \Phi\!\left(\frac{R_0z}{\sqrt{1-R_0^2}}\right)\d z +2\int_0^{(\tau_f-\lambda_{\mathrm{sel}})_+}\varphi(z) \left[1-\Phi\!\left(\frac{R_0z}{\sqrt{1-R_0^2}}\right)\right]\d z,
\label{eq:consistency-keep-fraction}
\end{equation}
where $\varphi$ and $\Phi$ are the standard Gaussian density and cdf. Unlike a lexicographic ``consistency first, margin second'' rule, this score changes the selected distribution continuously as $f$ varies. The two terms nevertheless retain distinct roles: consistency makes the gap correctable, whereas margin controls finite-budget informativeness. We write $s_f(u,z)=\mathbf 1\{q_{\lambda_{\mathrm{sel}}}(u,z)<\tau_f\}$ for the resulting selection indicator. Panel~(f) uses $\lambda_{\mathrm{sel}}=0.75$.

Applied to the $P=\alpha N$ candidates, $s_f$ retains $fP$ examples and hence uses sample ratio $f\alpha$, as in the matched-target case.

\paragraph{Conditional field and truncated moments.}
The selected student is again the spherical max-margin solution, with $\|\bJ\|=\sqrt N$; let $\kappa$ denote its maximum stability. Hence $R=\bJ^\top\bB/N$ and $\rho=\bJ^\top\bB_{\mathrm{tar}}/N$ are normalized cosine overlaps. For $t=\bJ^\top\bx/\sqrt N$, Gaussian conditioning gives
\begin{equation}
t\mid(u,z)\sim\mathcal N(au+bz,\Sigma^2),
\end{equation}
where
\begin{equation}
a=\frac{R-\rho R_0}{1-R_0^2},\qquad
b=\frac{\rho-RR_0}{1-R_0^2},\qquad
\Sigma^2=1-\frac{R^2+\rho^2-2R\rho R_0}{1-R_0^2}.
\label{eq:consistency-conditional-field}
\end{equation}
Let $h=yt$, $\mu=au+bz$, and
\begin{equation}
\eta=\frac{\kappa-y\mu}{\Sigma},\quad 
\mathcal I_0=\Phi(\eta),\quad
\mathcal I_1=\Sigma[\eta\Phi(\eta)+\varphi(\eta)],
\end{equation}
\begin{equation}
\mathcal I_2=\Sigma^2[(1+\eta^2)\Phi(\eta)
+\eta\varphi(\eta)].
\end{equation}
These are the truncated Gaussian moments generated by the active hard-margin constraints. For any $F(u,z)$, denote expectation under the normalized selected distribution by
\begin{equation}
\langle F\rangle_f
=\frac{\mathbb E[F(u,z)s_f(u,z)]}{f}.
\end{equation}

\paragraph{Finite-budget saddle-point equations.}
The replica-symmetric, max-margin limit gives the following closed system for
$(\kappa,R,\rho)$:
\begin{align}
f\alpha\langle\mathcal I_2\rangle_f &=\Sigma^2,
\label{eq:consistency-q}\\
f\alpha\left[\frac{\langle y(u-R_0z)\mathcal I_1\rangle_f}{1-R_0^2}+a\langle\mathcal I_0\rangle_f \right]&=a,
\label{eq:consistency-R}\\
f\alpha\left[\frac{\langle y(z-R_0u)\mathcal I_1\rangle_f}{1-R_0^2} +b\langle\mathcal I_0\rangle_f \right]&=b.
\label{eq:consistency-rho}
\end{align}
The target error is $\mathcal E_\alpha(f;\phi)=\arccos(\rho)/\pi$. These equations generate the curves in \Cref{fig:mixture_ratio_evolve}(f); numerical quadrature and finite-dimensional hard-margin simulations use the same candidate set and policy.

\paragraph{An interior point on the score path.}
The score in \eqref{eq:consistency-selection-rule} itself contains an explicit interior policy for every finite $\lambda_{\mathrm{sel}}>0$. Consider the threshold $\tau_f=\lambda_{\mathrm{sel}}$. A consistent example is then retained when $|z|<\lambda_{\mathrm{sel}}$, whereas an inconsistent example would have to satisfy $|z|<0$ and is therefore discarded up to a measure-zero boundary.  The corresponding retained fraction is
\begin{equation}
f_{\lambda_{\mathrm{sel}}} =\Pr[A,|z|<\lambda_{\mathrm{sel}}] =2\int_0^{\lambda_{\mathrm{sel}}}\varphi(z) \Phi\!\left(\frac{R_0z}{\sqrt{1-R_0^2}}\right)\d z, \qquad 0<f_{\lambda_{\mathrm{sel}}}<1.
\label{eq:consistency-interior-policy}
\end{equation}
Thus $f_{\lambda_{\mathrm{sel}}}$ is an interior point on the same $\lambda_{\mathrm{sel}}$-parameterized policy path used in panel~(f), rather than a certificate supplied by a separate consistency-only rule. More generally, $f\leq f_{\lambda_{\mathrm{sel}}}$ corresponds to $\tau_f\leq\lambda_{\mathrm{sel}}$ and contains no inconsistent examples, while $f>f_{\lambda_{\mathrm{sel}}}$ corresponds to $\tau_f>\lambda_{\mathrm{sel}}$ and retains a positive-measure set of inconsistent examples near the target boundary.

\paragraph{Large-budget endpoint of the score path.}
At $f=f_{\lambda_{\mathrm{sel}}}$, multiplying each retained input by its training label $y$ gives signed fields $yu=|u|>0$ and $yz=|z|>0$. In the $(\bB,\bB_{\mathrm{tar}})$ plane, the local support is therefore a wedge bounded by the two teacher decision boundaries. The truncation $|z|<\lambda_{\mathrm{sel}}$ changes its radial extent but not its tangent cone at the intersection of the boundaries. As the sample ratio grows, the active constraints approach both sides of this wedge. Within this isotropic model, the direction maximizing the angular margin to the two sides is their bisector; an orthogonal component cannot improve the margin because it carries no systematic label signal. This support-geometry characterization gives
\begin{equation}
\frac{\bJ_\infty}{\|\bJ_\infty\|} =\frac{\bB+\bB_{\mathrm{tar}}}{\|\bB+\bB_{\mathrm{tar}}\|},
\qquad
\rho_\infty=\cos\frac{\phi}{2}.
\label{eq:consistency-bisector}
\end{equation}
By contrast, full retention preserves all labels from $\bB$, so its large-sample solution aligns with $\bB$ and has target overlap $\cos\phi$. Consequently,
\begin{equation}
\mathcal E_\infty(f_{\lambda_{\mathrm{sel}}};\phi) =\frac{\phi}{2\pi} <\frac{\phi}{\pi} =\mathcal E_\infty(1;\phi).
\label{eq:consistency-advancement}
\end{equation}
Thus the family contains a fixed interior policy that strictly beats full retention at the endpoint. The particular value $\lambda_{\mathrm{sel}}=0.75$ is not needed for existence: every $\lambda_{\mathrm{sel}}>0$ defines a nontrivial $f_{\lambda_{\mathrm{sel}}}$ through \eqref{eq:consistency-interior-policy}.  Consistency supplies the endpoint shift, whereas the margin term locates that shift on the continuous selection path and controls which consistent constraints are retained.

The same argument clarifies the two sides of the threshold. For any fixed $0<f\leq f_{\lambda_{\mathrm{sel}}}$, the selected support has the same agreement wedge near the origin and hence the same bisector endpoint. Once $f>f_{\lambda_{\mathrm{sel}}}$, the cutoff $|z|<(\tau_f-\lambda_{\mathrm{sel}})$ admits inconsistent examples in an open neighborhood of the intersection. Any separating direction with a component orthogonal to $\bB$ then violates some training-label constraint in the population support; the asymptotic direction therefore returns to $\bB$. Thus, within this idealized model, the support-level prediction is
\begin{equation}
\mathcal E_\infty(f;\phi) =
\begin{cases}
\phi/(2\pi), & 0<f\leq f_{\lambda_{\mathrm{sel}}},\\
\phi/\pi, & f_{\lambda_{\mathrm{sel}}}<f\leq1.
\end{cases}
\label{eq:consistency-endpoint-path}
\end{equation}
Finite candidate sets smooth this sharp support-level transition.

\paragraph{Why the finite-budget curve has an interior optimum.}
The finite-budget tradeoff is now explicit. At $f=1$ the learner uses the full candidate set and approaches the training teacher. Near $f=f_{\lambda_{\mathrm{sel}}}<1$, the score has removed the inconsistent constraints and the learner approaches the improved bisector endpoint in \eqref{eq:consistency-advancement}. If $f$ is reduced too far at fixed $\alpha$, however, the retained training ratio $f\alpha$ vanishes and target accuracy returns to chance. Continuity of the saddle equations then implies that, for all sufficiently large but finite $\alpha$, a maximum is attained at some
\begin{equation}
f^\star(\alpha)\in(0,1) 
\end{equation}
and exceeds the value at $f=1$.  The optimum lies near the point where the last inconsistent constraints disappear. For the parameters of panel~(f), this is $f_{\lambda_{\mathrm{sel}}}$. As $\alpha\to\infty$, the small-$f$ sample-efficiency penalty disappears for every fixed $f>0$, so the smooth finite-budget peak need not converge to a unique optimizer.  Instead it can broaden into the improved plateau in \eqref{eq:consistency-endpoint-path}.  What persists is the advancement: the fixed interior policy $f=f_{\lambda_{\mathrm{sel}}}$ remains strictly better than full retention in the large-budget limit.

\subsection{Formalizing Proposition~\ref{prop:impossibility_main}:
Bi-level Optimality under No Gap}
\label{app:impossibility}

We now formalize Proposition~\ref{prop:impossibility_main} for mixture and selection. For mixture, the reference policy is the target-matching allocation $\widehat{\bw}$. When $\widehat{\bw}=\bw_{\mathrm{baseline}}$, the resulting optimality statement is exactly the no-gap conclusion of Proposition~\ref{prop:impossibility_main}. For selection, the corresponding baseline is the uniform soft allocation $\bar w_i=f$, whose operational counterpart is uniform random hard selection. The connection holds in the large-sample limit.

\paragraph{Domain-level mixture.}
We begin with data mixture, where the allocation variable assigns training exposure across domains. Throughout this section the losses are \emph{population} quantities, written $\mathcal L^{\mathrm{pop}}$ to distinguish them from the empirical, seen-data losses $\mathcal L$ measured by the $\Delta$-test (\Cref{sec:delta_test}). 

Let $x$ denote a data example (including its label, when applicable), and let $\ell(\btheta;x)$ be the loss of model $\btheta$ on that example. For domain $k$, define
\begin{equation*}
\mathcal L_{\mathrm{train}}^{\mathrm{pop},k}(\btheta) :=
\mathbb E_{x\sim\mathcal P_{\mathrm{train}}^k}[\ell(\btheta;x)],
\qquad
\mathcal L_{\mathrm{tar}}^{\mathrm{pop},k}(\btheta) :=
\mathbb E_{x\sim\mathcal P_{\mathrm{target}}^k}[\ell(\btheta;x)].
\end{equation*}

A mixture policy $\bw$ can be realized either by sampling $P_k=w_kP$ unit-weight examples from each domain, or by using $w_k$ directly as weighting coefficients on the domain losses. For an additive per-example loss, the empirical risk under the sampling-based implementation satisfies
\begin{equation}
\begin{aligned}
\frac{1}{P} \sum_{k=1}^K \sum_{i=1}^{P_k} \ell(\btheta;x_i^k) &=\sum_{k=1}^K
\frac{P_k}{P} \left[\frac{1}{P_k} \sum_{i=1}^{P_k} \ell(\btheta;x_i^k)\right]
\\
&\xrightarrow[P\to\infty]{} \sum_{k=1}^K w_k\, \mathbb E_{x\sim\mathcal P_{\mathrm{train}}^k} [\ell(\btheta;x)]
\\
&= \sum_{k=1}^K w_k\mathcal L_{\mathrm{train}}^{\mathrm{pop},k}(\btheta).
\end{aligned}
\label{eq:mixture-implementation-equivalence}
\end{equation}

Assuming i.i.d.\ sampling within each domain and uniform convergence of the empirical domain risks to their population counterparts, the sampling-based objective converges uniformly to the loss-weighted population objective. Thus, the two implementations induce the same limiting population problem. At finite $P$, they may differ because $w_kP$ must be rounded and the empirical domain averages fluctuate, but these differences vanish as $P_k\to\infty$ for every $k$. This domain-level equivalence is distinct from the example-level soft relaxation introduced below. We therefore state the bi-level mixture result directly in the loss-weighted population form.

\begin{proposition}[Target-matching optimality for mixture]
\label{prop:impossibility_mixture}
Consider the bi-level mixture problem
\begin{equation}
\bw^\star \in \arg\min_{\bw\in\mathcal W} \sum_{k=1}^K \widehat w_k \mathcal L_{\mathrm{tar}}^{\mathrm{pop},k} \bigl(\btheta^\star(\bw)\bigr), \quad\text{s.t.}\quad \btheta^\star(\bw) \in \arg\min_{\btheta} \sum_{k=1}^K w_k\mathcal L_{\mathrm{train}}^{\mathrm{pop},k}(\btheta),
\label{eq:bilevel}
\end{equation}
where $\widehat{\bw}\in\mathcal W$ is the target domain composition. Suppose $\mathcal P_{\mathrm{train}}^k=\mathcal P_{\mathrm{target}}^k$ and hence $\mathcal L_{\mathrm{train}}^{\mathrm{pop},k}=\mathcal L_{\mathrm{tar}}^{\mathrm{pop},k} =:\mathcal L^{\mathrm{pop},k}$ for every $k$. Then the target-matching allocation $\bw=\widehat{\bw}$ is a global minimizer of \eqref{eq:bilevel}. In particular, if the target weights are uniform, then $\bw=\tfrac{1}{K}\mathbf1$ is globally optimal.
\end{proposition}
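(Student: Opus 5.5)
The argument is a direct comparison through the inner problem's optimality at $\bw=\widehat{\bw}$. Write $F(\btheta):=\sum_{k=1}^K \widehat w_k\mathcal L^{\mathrm{pop},k}(\btheta)$ for the outer objective. Under the no-gap hypothesis $\mathcal L_{\mathrm{train}}^{\mathrm{pop},k}=\mathcal L_{\mathrm{tar}}^{\mathrm{pop},k}$, the inner objective at a general $\bw$ is $G_{\bw}(\btheta)=\sum_k w_k\mathcal L^{\mathrm{pop},k}(\btheta)$. The first observation is that $G_{\widehat{\bw}}\equiv F$ identically as functions of $\btheta$. This is the only place the no-gap assumption enters.

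Next, fix any $\bw\in\mathcal W$ and any inner solution $\btheta^\star(\bw)\in\arg\min_{\btheta}G_{\bw}(\btheta)$. Let $\btheta^\star(\widehat{\bw})$ be any element of $\arg\min_{\btheta}G_{\widehat{\bw}}=\arg\min_{\btheta}F$. Because $\btheta^\star(\widehat{\bw})$ globally minimizes $F$ over all parameters, and $\btheta^\star(\bw)$ is one particular parameter, we get
\[
F\bigl(\btheta^\star(\widehat{\bw})\bigr)=\min_{\btheta}F(\btheta)\le F\bigl(\btheta^\star(\bw)\bigr).
\]
Since $\bw$ was arbitrary, $\widehat{\bw}$ attains the infimum of the outer objective over $\mathcal W$, so it is a global minimizer of \eqref{eq:bilevel}. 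The uniform case follows by substituting $\widehat{\bw}=\tfrac1K\mathbf 1$.

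The main point requiring care is well-posedness when the inner $\arg\min$ is set-valued. The argument above shows the inequality for every selection of $\btheta^\star(\bw)$ and every selection of $\btheta^\star(\widehat{\bw})$. This is because all minimizers of $F$ share the common value $\min F$. Hence the conclusion holds under both optimistic and pessimistic bi-level conventions.

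I will also state explicitly the standing assumption that the inner minimum is attained at $\widehat{\bw}$, i.e.\ $\arg\min F\neq\emptyset$. This is implicit in the definition of \eqref{eq:bilevel}. If it fails, the same argument gives $\inf F\le F(\btheta^\star(\bw))$ and the statement should be read in terms of infima.

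Finally, I will connect the result back to Proposition~\ref{prop:impossibility_main}. When $\widehat{\bw}=\bw_{\mathrm{baseline}}$, no allocation lowers the population endpoint. By Definition~\ref{def:adv_acc}, any benefit of curation is then only acceleration.
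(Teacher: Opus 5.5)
Your proposal is correct and follows essentially the same argument as the paper: under no gap the inner objective at $\bw=\widehat{\bw}$ coincides with the outer target objective, so its minimizer $\btheta^\star(\widehat{\bw})$ cannot do worse on the target than $\btheta^\star(\bw)$ for any other $\bw\in\mathcal W$. Your remarks on set-valued inner solutions (every minimizer of $F$ attains $\min F$) and on attainment of the inner minimum are sound refinements that the paper leaves implicit.
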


\begin{proof}
Under the no-gap condition, the training and target population losses coincide within every domain. When $\bw=\widehat{\bw}$, the inner problem becomes
\begin{equation}
\btheta^\star(\widehat{\bw}) \in \arg\min_{\btheta} \sum_{k=1}^K \widehat w_k \mathcal L^{\mathrm{pop},k}(\btheta), 
\end{equation}
which is exactly the outer target objective viewed as a function of $\btheta$. Therefore, its value at $\btheta^\star(\widehat{\bw})$ cannot exceed its value at the model $\btheta^\star(\bw)$ induced by any other mixture $\bw\in\mathcal W$:
\begin{equation} 
\sum_{k=1}^K\widehat w_k \mathcal L^{\mathrm{pop},k} \bigl(\btheta^\star(\widehat{\bw})\bigr) \leq \sum_{k=1}^K\widehat w_k \mathcal L^{\mathrm{pop},k} \bigl(\btheta^\star(\bw)\bigr).
\label{eq:matching-inner-optimality}
\end{equation}
Thus, $\widehat{\bw}$ is a global minimizer of the outer objective.
\end{proof}
Setting $\widehat{\bw}=\bw_{\mathrm{baseline}}$ thus recovers the mixture instantiation of Proposition~\ref{prop:impossibility_main}.

The assumption $\mathcal P_{\mathrm{train}}^k=\mathcal P_{\mathrm{target}}^k$ is a distribution-level condition. It implies equality of the corresponding population losses but does not assert equality between individual training and target examples. This distinction becomes important for data selection, where exposure is allocated at the level of individual sample rather than domains.

\paragraph{Example-level selection.}
We next express selection as a per-example bi-level problem. Let $\mathcal D_{\mathrm{train}}=\{x_i\}_{i=1}^{P}$ be the candidate pool. A hard selection policy is represented exactly by binary indicators in
\begin{equation}
\mathcal W_f^{\mathrm{hard}} = \left\{\bw\in\{0,1\}^P: \sum_{i=1}^{P}w_i=fP\right\}.
\end{equation}
For the population-level argument, we consider the continuous relaxation
\begin{equation}
\mathcal W_f^{\mathrm{soft}}= \left\{\bw\in[0,1]^P: \sum_{i=1}^{P}w_i=fP\right\}.
\end{equation}
The relaxed bi-level problem is:
\begin{equation}
\min_{\bw\in\mathcal W_f^{\mathrm{soft}}} \mathcal L_{\mathrm{tar}}^{\mathrm{pop}} \bigl(\btheta^\star(\bw)\bigr) \quad\text{subject to}\quad \btheta^\star(\bw) \in \arg\min_{\btheta} \frac{1}{fP} \sum_{i=1}^{P}w_i\ell(\btheta;x_i).
\label{eq:selection-bilevel}
\end{equation}
Here, selection reweights individual examples in the finite candidate pool, whereas evaluation is performed on the target distribution as a whole. We therefore define the no-gap condition as $\mathcal P_{\mathrm{train}}=\mathcal P_{\mathrm{target}}$. The inner objective is a weighted loss over the selected examples, while the outer objective is the target population loss. Under uniform selection, the former converges to the latter as $P\to\infty$, which motivates the asymptotic result below.

\begin{proposition}[Asymptotic optimality of uniform selection]
\label{prop:selection-impossibility}
Suppose the candidate and target examples are drawn from the same distribution, $\mathcal P_{\mathrm{train}}=\mathcal P_{\mathrm{target}}$. Assume that empirical risk minimization is consistent over the model class (e.g., via uniform convergence). Then the uniform soft allocation $\bar w_i=f$ is a global minimizer of \eqref{eq:selection-bilevel} in the large-sample limit.
\end{proposition}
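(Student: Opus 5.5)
We follow the pattern of Proposition~\ref{prop:impossibility_mixture}. The goal is to show that the inner objective under the uniform allocation becomes the outer objective in the limit. Once that holds, the uniform allocation attains the infimum of the target loss over all models, and so no allocation in $\mathcal W_f^{\mathrm{soft}}$ can beat it.

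The first step is to compute the inner objective at $\bar w_i=f$. There it equals $\frac{1}{fP}\sum_i f\,\ell(\btheta;x_i)=\frac1P\sum_i\ell(\btheta;x_i)$, which is exactly the full-pool empirical risk $\widehat{\mathcal L}_P(\btheta)$. The factor $f$ cancels, so uniform soft selection coincides with full retention.

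Next, we use the assumption $\mathcal P_{\mathrm{train}}=\mathcal P_{\mathrm{target}}$ together with uniform convergence, $\sup_{\btheta}|\widehat{\mathcal L}_P(\btheta)-\mathcal L_{\mathrm{tar}}^{\mathrm{pop}}(\btheta)|\to0$. By the standard ERM consistency argument, $\mathcal L_{\mathrm{tar}}^{\mathrm{pop}}(\btheta^\star(\bar\bw))\to\inf_{\btheta}\mathcal L_{\mathrm{tar}}^{\mathrm{pop}}(\btheta)=:\mathcal L^\ast$. Concretely, for any $\btheta$ we have
\[
\mathcal L^{\mathrm{pop}}_{\mathrm{tar}}(\btheta^\star(\bar\bw))\le\widehat{\mathcal L}_P(\btheta^\star(\bar\bw))+\epsilon_P\le\widehat{\mathcal L}_P(\btheta)+\epsilon_P\le\mathcal L^{\mathrm{pop}}_{\mathrm{tar}}(\btheta)+2\epsilon_P,
\]
where $\epsilon_P$ is the uniform deviation.

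Then, for an arbitrary $\bw\in\mathcal W_f^{\mathrm{soft}}$, possibly depending on $P$ and even on the sample, $\btheta^\star(\bw)$ is just some model in the class. Hence $\mathcal L_{\mathrm{tar}}^{\mathrm{pop}}(\btheta^\star(\bw))\ge\mathcal L^\ast$ trivially. Combining the two bounds gives
\[
\mathcal L_{\mathrm{tar}}^{\mathrm{pop}}(\btheta^\star(\bar\bw))\le\inf_{\bw}\mathcal L_{\mathrm{tar}}^{\mathrm{pop}}(\btheta^\star(\bw))+2\epsilon_P,
\]
with $\epsilon_P\to0$. This is the precise meaning of ``global minimizer in the large-sample limit.''

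The main subtlety is that, unlike the mixture case, the outer objective is a population loss while the inner objective is an empirical one. Uniform selection is therefore only asymptotically optimal: a data-dependent $\bw$ might exploit finite-sample fluctuations, and that is acceleration rather than advancement. The lower bound needs no uniformity, because it holds for every model in the class. All of the work thus sits in the uniform-convergence step, which we take as an assumption. If argmins are non-unique, the bound above already holds for any selection of $\btheta^\star(\bar\bw)$, so no uniqueness is needed.

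Finally, to connect with hard random selection, we note that a uniformly random subset of size $fP$ is itself an i.i.d.\ sample of size $fP\to\infty$ from the same distribution. Applying the same bound with $\epsilon_{fP}$ then yields the operational counterpart.
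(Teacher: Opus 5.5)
Your proposal is correct and takes the same route as the paper: the uniform soft allocation reduces the inner objective to the full-pool empirical risk, and ERM consistency under $\mathcal P_{\mathrm{train}}=\mathcal P_{\mathrm{target}}$ drives its target loss to the population infimum, which bounds every other allocation's model from below. Your explicit $2\epsilon_P$ bound is stated on loss values rather than the paper's looser ``approaches the set of population target minimizers'' phrasing, and your direct i.i.d.-subsample argument for random hard selection is simpler than the paper's unbiasedness-and-concentration remark; both are sharpenings of the same argument, not a different approach.
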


\begin{proof}
Under the no-gap condition, the population training and target objectives coincide:
\begin{equation}
\mathcal L_{\mathrm{tar}}^{\mathrm{pop}}(\btheta) = \mathbb E_{x\sim\mathcal P_{\mathrm{target}}} [\ell(\btheta;x)] = \mathbb E_{x\sim\mathcal P_{\mathrm{train}}} [\ell(\btheta;x)].
\label{eq:selection-no-gap}
\end{equation}
For the uniform allocation $\bar w_i=f$, the inner objective becomes
\begin{equation}
\frac{1}{fP} \sum_{i=1}^{P} \bar w_i\ell(\btheta;x_i) = \frac{1}{P} \sum_{i=1}^{P} \ell(\btheta;x_i).
\label{eq:uniform-selection-objective}
\end{equation}
As $P\to\infty$, consistency of empirical risk minimization implies that $\btheta^\star(\bar\bw)$ approaches the set of population target minimizers. Hence, in the large-sample limit, for every admissible allocation $\bw\in\mathcal W_f^{\mathrm{soft}}$,
\begin{equation}
\mathcal L_{\mathrm{tar}}^{\mathrm{pop}} \bigl(\btheta^\star(\bar\bw)\bigr) \leq \mathcal L_{\mathrm{tar}}^{\mathrm{pop}} \bigl(\btheta^\star(\bw)\bigr).
\end{equation}
Therefore, no alternative selection allocation achieves a better limiting target endpoint than uniform selection.
\end{proof}

The allocation $\bar w_i=f$ is an analytical reference rather than a literal subset: it reproduces the empirical risk of the full candidate pool while satisfying the same total weight constraint. Its hard-selection analogue is uniform random selection without replacement, under which every example has marginal inclusion probability $f$. Conditional on the candidate pool, the resulting empirical objective is an unbiased estimator of the soft-allocation objective in \eqref{eq:uniform-selection-objective} and concentrates around it as $P\to\infty$. Moreover, $\mathcal W_f^{\mathrm{hard}}\subset\mathcal W_f^{\mathrm{soft}}$, so the result also rules out a lower limiting target loss from any hard selection policy. Thus, uniform random selection is the fixed-retention, asymptotic counterpart of the full-retention reference policy in Proposition~\ref{prop:impossibility_main}: both induce the baseline distribution in the population limit. This correspondence does not imply finite-sample equivalence between the soft allocation and any particular random subset.

\paragraph{Connection to Proposition~\ref{prop:impossibility_main}.}
The two results instantiate the same no-gap principle at different
granularities. For mixture, target matching is optimal directly at the
population level; setting $\widehat{\bw}=\bw_{\mathrm{baseline}}$ recovers the mixture case of Proposition~\ref{prop:impossibility_main}. For selection, full retention is the population reference, and uniform random hard selection approaches the same endpoint asymptotically. Thus, no gap rules out advancement in both cases, while still allowing finite-budget acceleration.

\paragraph{Relation to the empirical $\Delta$-test.}
The propositions above are stated in terms of population losses $\mathcal L^{\mathrm{pop}}$, whereas the $\Delta$-test of \Cref{sec:delta_test} is computed from the empirical, seen-data losses $\mathcal L$ of a single baseline run. The two are connected by the same large-sample limit used in the proofs: when the baseline visits each example about once in the data-rich regime, the empirical training loss converges to its population counterpart, so the no-gap condition $\mathcal P_{\mathrm{train}}=\mathcal P_{\mathrm{target}}$ typically manifests as a vanishing empirical loss gap $\Delta\approx0$, and \Cref{prop:impossibility_mixture,prop:selection-impossibility} predict that no allocation can advance the model. This correspondence breaks precisely in the endogenous regime, where repetition on under-resourced domains drives the empirical training loss below the population loss: the seen-data $\Delta$ then becomes nonzero and detects a gap that a purely population statement would miss (\Cref{app:measuring_delta}). The $\Delta$-test is thus a useful observable, finite-sample signature of the population condition these propositions characterize, but it is not equivalent to that condition.

\subsection{Measuring and Interpreting \texorpdfstring{$\Delta$}{Delta}}
\label{app:measuring_delta}

\paragraph{Measurement on seen training data.}
We evaluate $\mathcal L_{\mathrm{train}}$ on the examples used to train the uncurated baseline, rather than on a held-out sample from the training distribution. This distinction matters for an endogenous gap: when a domain is under-resourced, repeated exposure can reduce its loss on the seen examples without producing the same improvement on the target distribution. A held-out training split would not reveal this repetition-induced discrepancy. Because $\Delta$ compares a finite collection of losses rather than distributions directly, it remains an operational diagnostic rather than a complete statistical test: distinct distributions can yield similar losses, and finite-sample fluctuations can produce a small nonzero value.

\paragraph{Relation to curation gain.}
Figure~\ref{fig:replica-gain-vs-delta} evaluates the diagnostic in the two solvable models studied above. In mixture, varying the target composition changes $\Delta$ while the train and target distributions remain matched within each domain. The gain from optimizing the mixture is zero at $\Delta=0$ and increases as the composition gap grows. In selection, increasing the teacher angle $\phi$ enlarges the candidate--target mismatch, and the improvement achieved by target-aware selection over full retention increases correspondingly. Thus, despite their different allocation units and performance metrics, both models exhibit the same qualitative relation: a larger gap exposed by the uncurated baseline provides greater headroom for curation.

\begin{figure*}[t]
\centering
\begin{minipage}{0.48\linewidth}
    \centering
    \subfigure[]{
    \includegraphics[width=\linewidth]{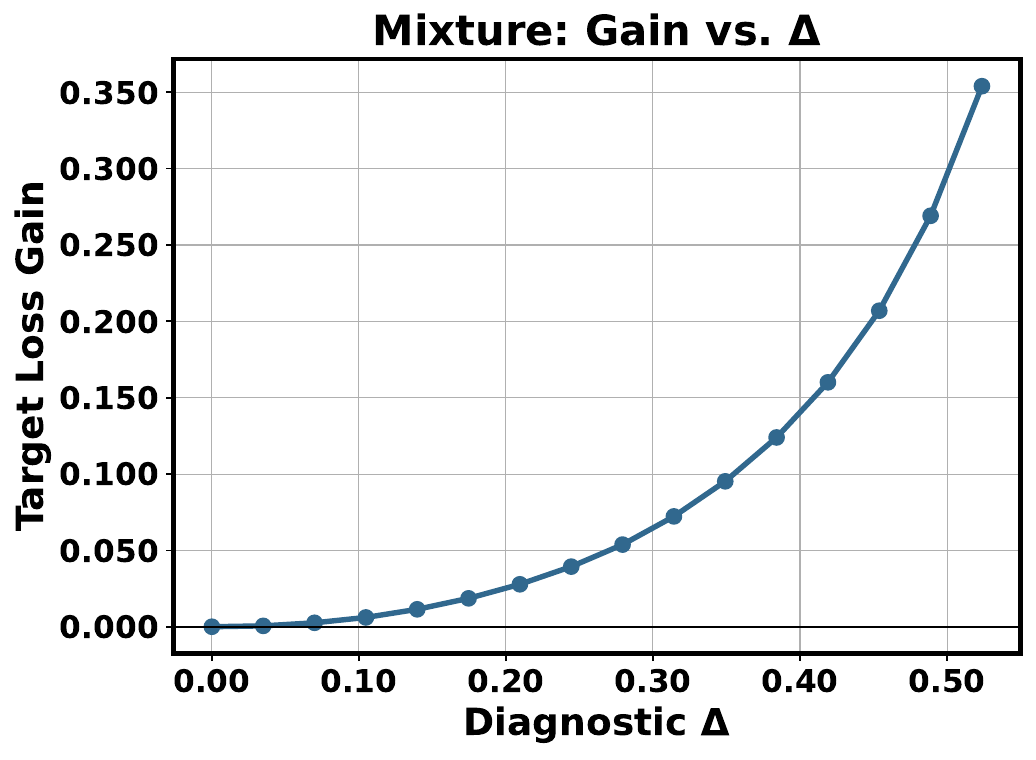}}
\end{minipage}
\hfill
\begin{minipage}{0.48\linewidth}
    \centering
    \subfigure[]{
    \includegraphics[width=\linewidth]{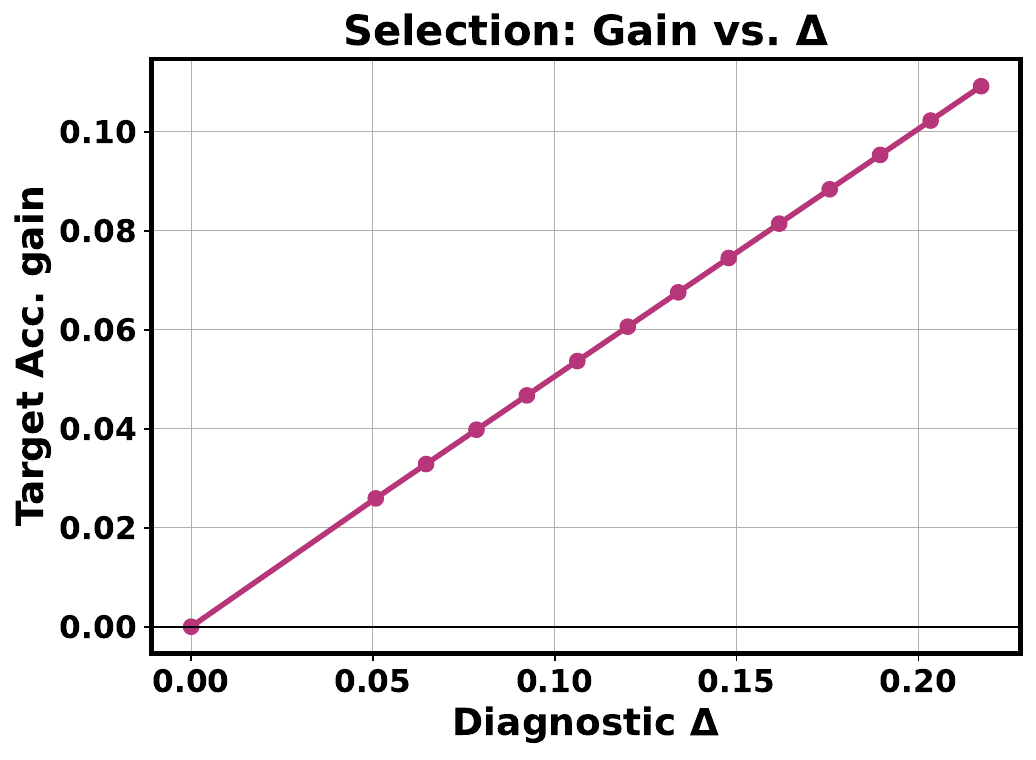}}
\end{minipage}
\caption{\textbf{Curation gain versus the $\Delta$ diagnostic in the solvable models.}
Each point varies the parameter that generates the train--target gap and reports the improvement of the optimized policy over the uncurated baseline.
(a) Mixture: target-composition sweep. Gain is the reduction in target logistic loss. (b) Selection: teacher-angle sweep. Gain is the reduction in target classification error.}
\label{fig:replica-gain-vs-delta}
\end{figure*}

\subsection{Real-Data Experimental Details}
\label{app:real_data_details}
This section collects the implementation details. Across settings, we evaluate all baselines within a shared protocol and report five independent runs. The diagnostic $\Delta$ is measured at 50\% training budget.

\subsubsection{Data-mixture experiments}
\paragraph{Setup and evaluation.}
Following \citet{chen2025aioli}, we use a 6B-token subset of SlimPajama~\citep{soboleva2023slimpajama}, comprising ArXiv, Books, CommonCrawl, C4, GitHub, StackExchange, and Wikipedia. In the no-gap setting, both training and target weight all seven domains uniformly. In the exogenous-gap setting, CommonCrawl is held out as the target and the remaining six domains form the candidate training pool. The endogenous-gap setting retains the uniformly weighted target but restricts the available corpus to 300M tokens while preserving its original domain proportions, causing small domains to be repeatedly sampled. We report held-out perplexity for each target domain. For a multi-domain target, "Average" is the exponential of the mean domain loss, following \citet{fan2024doge}. for the exogenous-gap setting, the metric is CommonCrawl perplexity.

\paragraph{Shared training protocol.}
The main experiments train 160M-parameter GPT-style models~\citep{biderman2023pythia} for 40K steps. As summarized in \Cref{tab:mixture-shared-config}. All three gap settings use the same batch size, context length, schedule, and number of steps, so the comparison changes the available data or target rather than the base training protocol. Mixture optimization and final-model training use the same model-training hyperparameters unless stated otherwise.

\begin{table}[h]
\caption{Shared configuration for the data-mixture experiments.}
\label{tab:mixture-shared-config}
\centering
\small
\setlength{\tabcolsep}{6pt}
\begin{tabular}{@{}lcc@{}}
\toprule
Configuration & 160M & 1.1B \\ \midrule
Training steps & 40K & 40K \\
Batch size & 8 & 8 \\
Context length & 2048 & 2048 \\
Peak learning rate & $5\times10^{-4}$ & $2\times10^{-4}$ \\
Warmup steps & -- & 500 \\
Weight decay & $0.01$ & $0.01$ \\
Gradient clipping & 1.0 & 1.0 \\ \bottomrule
\end{tabular}
\end{table}

\paragraph{Baselines and implementation.}
All methods use the shared protocol above. Uniform assigns equal exposure to the candidate domains and serves as the baseline policy. We compare it with RegMix~\citep{liu2025regmix}, which fits global data mixing laws and predict the performance on other mixtures; DoGE~\citep{fan2024doge}, which updates domain weights using their estimated influence on the target objective;  Skill-It~\citep{chen2023skillit} utilize the inter-domain dependencies to establish a local data mixing law that predicts the per-domain performance; Aioli~\citep{chen2025aioli} improves Skill-It by dynamically updating the relation matrix along model training; and TANDEM~\citep{wang2025tandem}, which optimizes the mixture through a bi-level objective with twinned networks. We fit the RegMix mixing law using 128 independently trained 1M-parameter proxy models. For RegMix, DoGE, Skill-It, and Aioli, we adapt the public implementations to our common data interface and training pipeline.\footnote{Public implementations: RegMix,\url{https://github.com/sail-sg/regmix}; DoGE, \url{https://github.com/Olivia-fsm/DoGE}; and Skill-It and Aioli, \url{https://github.com/HazyResearch/aioli}.} Method-specific choices otherwise follow the corresponding papers and released implementations. The degree of smoothing recommended in the original Skill-It and Aioli settings can yield nearly uniform allocations in our experiments, we adjust only their smoothing controls for each gap setting, using less smoothing in the endogenous- and exogenous-gap settings. The learned allocation is used to train the final model with the shared configuration above. Implementation and configuration will be released following internal approval.

\subsubsection{Data-selection experiments}
\paragraph{Setup and target construction.}
We use TinyLlama-1.1B~\citep{zhang2024tinyllama} as the base model and draw a 5B-token candidate pool from OpenWebMath~\citep{paster2024openwebmath}. The in-distribution target is a held-out 0.3B-token OpenWebMath subset. The shifted target blends MetaMath~\citep{yu2024metamath} and Mammoth~\citep{yu22024mammoth}. The main comparison retains 60\% of the candidate pool. \Cref{fig:selection_results} additionally sweeps the retained fraction from 0.6 to 1.0.

\paragraph{Shared training protocol.}
All selection methods use the configuration in \Cref{tab:selection-shared-config}. They share the same base model, candidate pool, retained-token budget, and training protocol, so that only the selection rule and target distribution vary.

\begin{table}[h]
\caption{Shared configuration for the data-selection experiments.}
\label{tab:selection-shared-config}
\centering
\small
\setlength{\tabcolsep}{6pt}
\begin{tabular}{@{}lc@{}}
\toprule
Configuration & TinyLlama-1.1B \\ \midrule
Candidate-training steps & 5000 \\
Batch size & 512 \\
Context length & 2048 \\
Learning rate & $8\times10^{-5}$ \\
Learning-rate schedule & Constant \\
Weight decay & 0.01 \\
Gradient clipping & 1.0 \\
Main retained fraction & 0.6 \\ \bottomrule
\end{tabular}
\end{table}

\paragraph{Baselines and implementation.}
The original pretrained checkpoint is denoted Base, while Base-CT continually pre-trains it on the full candidate pool and serves as the full-retention baseline. Random controls for the reduced token budget by drawing an equally sized subset. We compare three target-agnostic selectors---EL2N~\citep{mansheej21datadiet}, which prioritizes examples with large prediction error; InfoBatch~\citep{qin2024infobatch}, which approximately preserves the expected full-data gradient; and Token-Cleaning~\citep{pang2025tokencleaning}, which applies a noisy-label cleaning criterion---with two target-aware selectors: RHO-1~\citep{lin2024rho1}, which ranks tokens by excess loss relative to the target distribution, and BLADE~\citep{wang2026blade}, which performs bi-level optimization for large-scale data selection. For EL2N, InfoBatch, and Token-Cleaning, we adapt their public implementations only as needed for our common data interface and training pipeline.\footnote{Public implementations: EL2N, \url{https://github.com/mansheej/data_diet}; InfoBatch, \url{https://github.com/NUS-HPC-AI-Lab/InfoBatch}; and Token-Cleaning, \url{https://github.com/UCSC-REAL/TokenCleaning}.} For RHO-1 and BLADE, for which no public implementations are available, we implement the methods following their published descriptions. Method-specific settings otherwise follow the corresponding papers or released implementations, while every selected model is trained with the shared configuration in
\Cref{tab:selection-shared-config}.

\paragraph{Evaluation.}
We report held-out target loss, which is evaluated on the distribution used to define $\Delta$ and therefore reflects the train--target gap more directly than downstream accuracy. We additionally report accuracy on eight complementary mathematics benchmarks: GSM8K~\citep{cobbe2021gsm8k}, MATH~\citep{hendrycks2021math}, SVAMP~\citep{patel2021svamp}, ASDiv~\citep{miao2020asdiv}, MAWPS~\citep{kedziorski2016mawps}, TabMWP~\citep{lu2023tab} for tabular reasoning, MathQA~\citep{amini2019mathqa} for operation-based multiple-choice reasoning and MMLU-STEM~\citep{hendrycks2020mmlu}. This evaluation encompasses diverse question types like multiple-choice and open-ended questions.

\subsection{Comparisons with Standard Deviation}
\label{app:real_world_stds}
Each real-world LLM training experiment is run 5 times independently. The results, including the standard deviation, are reported in Table~\Cref{tab:mixture_without_gap_std}, Table~\Cref{tab:mixture_with_gap_std} and Table~\Cref{tab:sota_math_std}.

\begin{sidewaystable}
\caption{Comparison of data mixture optimization methods in the no-gap regime ($\Delta=0.07$). The averaged target perplexity and standard deviation over 5 independent runs are reported.}
\label{tab:mixture_without_gap_std}
\centering
\small
\begin{tabular}{@{}lcccccccc@{}}
\toprule
Methods  & \multicolumn{1}{l}{Arxiv} & \multicolumn{1}{l}{Book} & \multicolumn{1}{l}{C4} & \multicolumn{1}{l}{CommonCrawl} & \multicolumn{1}{l}{Github} & \multicolumn{1}{l}{Stackexchange} & \multicolumn{1}{l}{Wikipedia} & \multicolumn{1}{l}{Average} \\ \midrule                                                                      
Uniform  & 10.02$\pm_{\text{0.11}}$                     & 39.61$\pm_{\text{0.25}}$                    & 49.67$\pm_{\text{0.41}}$                  & 42.70$\pm_{\text{0.38}}$                           & 5.16$\pm_{\text{0.06}}$                       & 12.25$\pm_{\text{0.09}}$                             & 20.00$\pm_{\text{0.22}}$                         & 19.71$\pm_{\text{0.17}}$                       \\
RegMix   &  10.53$\pm_{\text{0.34}}$                    &     41.90$\pm_{\text{1.32}}$                      &    48.67$\pm_{\text{1.85}}$             &     41.65$\pm_{\text{1.59}}$                          &        5.75$\pm_{\text{0.24}}$                     &   11.73$\pm_{\text{0.73}}$                           &      19.86$\pm_{\text{1.19}}$                          &      19.80$\pm_{\text{0.12}}$                        \\
DoGE     &     10.96$\pm_{\text{0.22}}$        &   35.49$\pm_{\text{0.19}}$       &            42.42$\pm_{\text{0.26}}$            &       36.81$\pm_{\text{0.28}}$                          &     6.70$\pm_{\text{0.10}}$                       &       13.97$\pm_{\text{0.17}}$                            &        23.92$\pm_{\text{0.30}}$                       &       20.17$\pm_{\text{0.16}}$                      \\
Skill-It &     10.64$\pm_{\text{0.17}}$                      &     38.32$\pm_{\text{0.21}}$                     &         47.99$\pm_{\text{0.37}}$               &     40.62$\pm_{\text{0.45}}$                            &        5.73$\pm_{\text{0.07}}$                    &          12.07$\pm_{\text{0.13}}$                         &           20.78$\pm_{\text{0.20}}$                    &     19.68$\pm_{\text{0.16}}$                         \\
Aioli    &     10.09$\pm_{\text{0.13}}$                      &        39.86$\pm_{\text{0.35}}$                  &          49.81$\pm_{\text{0.54}}$              &     42.83$\pm_{\text{0.52}}$                            &   5.66$\pm_{\text{0.04}}$                         &    12.36$\pm_{\text{0.06}}$                                &        19.67$\pm_{\text{0.21}}$             &        19.77$\pm_{\text{0.15}}$                     \\
TANDEM   &        10.18$\pm_{\text{0.12}}$                    &  37.89$\pm_{\text{0.32}}$                         &          46.77$\pm_{\text{0.40}}$               &       40.41$\pm_{\text{0.37}}$                           &     5.89$\pm_{\text{0.05}}$                        &    12.61$\pm_{\text{0.12}}$                                &     20.96$\pm_{\text{0.21}}$                           &     19.66$\pm_{\text{0.15}}$                       \\ \bottomrule
\end{tabular}
\vspace{5\baselineskip}

\caption{Comparison of data mixture optimization methods under endogenous ($\Delta =0.64$)and exogenous ($\Delta=1.26$) train--target gaps. The averaged target perplexity and standard deviation over 5 independent runs are reported.}
\label{tab:mixture_with_gap_std}
\centering
\small
\begin{tabular}{@{}lcccccccc|l@{}}
\toprule
\multicolumn{9}{c|}{Endogenous Gap}                 & \multicolumn{1}{c}{Exogenous Gap} \\ \midrule
Methods  & \multicolumn{1}{l}{Arxiv} & \multicolumn{1}{l}{Book} & \multicolumn{1}{l}{C4} & \multicolumn{1}{l}{CommonCrawl} & \multicolumn{1}{l}{Github} & \multicolumn{1}{l}{Stackexchange} & \multicolumn{1}{l}{Wikipedia} & \multicolumn{1}{l|}{Average} & CommonCrawl                \\ \midrule
Uniform  & 13.14$\pm_{\text{0.04}}$ & 57.44$\pm_{\text{0.73}}$ & 65.57$\pm_{\text{0.56}}$ & 57.93$\pm_{\text{0.54}}$ & 8.21$\pm_{\text{0.04}}$  & 19.59$\pm_{\text{0.03}}$ & 33.90$\pm_{\text{0.20}}$ & 28.60$\pm_{\text{0.11}}$                   &          49.19$\pm_{\text{0.28}}$                \\
RegMix  & 13.33$\pm_{\text{0.07}}$ & 54.12$\pm_{\text{0.23}}$ & 63.74$\pm_{\text{0.67}}$ & 56.63$\pm_{\text{0.70}}$ & 8.23$\pm_{\text{0.08}}$ & 20.03$\pm_{\text{0.11}}$ & 33.34$\pm_{\text{0.19}}$ & 28.24$\pm_{\text{0.19}}$ &            40.63$\pm_{\text{0.90}}$                \\
DoGE   & 12.95$\pm_{\text{0.05}}$ & 54.82$\pm_{\text{1.28}}$ & 62.84$\pm_{\text{0.63}}$ & 55.52$\pm_{\text{0.61}}$ & 8.13$\pm_{\text{0.05}}$  & 18.61$\pm_{\text{0.08}}$ & 32.72$\pm_{\text{0.17}}$ & 27.62$\pm_{\text{0.22}}$ &                40.67$\pm_{\text{0.44}}$              \\
Skill-It & 14.54$\pm_{\text{0.13}}$ & 44.24$\pm_{\text{0.26}}$ & 48.99$\pm_{\text{0.31}}$ & 42.10$\pm_{\text{0.32}}$ & 8.33$\pm_{\text{0.10}}$  & 21.10$\pm_{\text{0.19}}$ & 49.46$\pm_{\text{0.54}}$ & 27.38$\pm_{\text{0.22}}$  &            41.08$\pm_{\text{0.17}}$                \\
Aoili & 12.80$\pm_{\text{0.14}}$ & 51.01$\pm_{\text{0.37}}$ & 61.13$\pm_{\text{0.31}}$ & 53.51$\pm_{\text{0.34}}$ & 8.14$\pm_{\text{0.05}}$  & 18.97$\pm_{\text{0.14}}$ & 32.63$\pm_{\text{0.13}}$ & 27.11$\pm_{\text{0.16}}$ &             40.17$\pm_{\text{0.64}}$                \\
TANDEM        & 12.60$\pm_{\text{0.12}}$ & 49.76$\pm_{\text{1.56}}$ & 54.19$\pm_{\text{0.76}}$ & 47.24$\pm_{\text{0.90}}$ & 8.35$\pm_{\text{0.05}}$  & 17.98$\pm_{\text{0.11}}$ & 33.51$\pm_{\text{0.41}}$ & 26.02$\pm_{\text{0.22}}$      &        40.24$\pm_{\text{0.38}}$                    \\ \bottomrule
\end{tabular}
\end{sidewaystable}

\begin{sidewaystable}
\caption{Comparison of data selection methods with an in-distribution OpenWebMath target (no gap, $\Delta=0.016$, upper) and the shifted MetaMath--Mammoth reasoning target (with gap, $\Delta=0.549$, lower). Results on 8 downstream tasks and the averaged accuracy are reported. Each experiment is carried out 5 times. }
\label{tab:sota_math_std}
\setlength\tabcolsep{2.8pt}
\small
\centering
\begin{tabular}{@{}lcccccccccc@{}}
\toprule
                & \multicolumn{1}{l}{Train Toks.} & \multicolumn{1}{l}{GSM8K} & \multicolumn{1}{l}{MATH} & \multicolumn{1}{l}{SVAMP} & \multicolumn{1}{l}{ASDiv} & \multicolumn{1}{l}{MAWPS} & \multicolumn{1}{l}{TAB} & \multicolumn{1}{l}{MQA} & \multicolumn{1}{l}{MMLU-STEM} & \multicolumn{1}{l}{Avg.} \\ \midrule
\multicolumn{11}{c}{Selection without Gap}                                                                                                                                                                                                                                                                  \\ \midrule
Base            & -                               & 2.9                      & 3.2                      & 11.0                      & 18.1                      & 20.4                      & 12.5                    & 14.6                    & 16.1                          &      12.4              \\
Base-CT         & 5B                             & 4.3$\pm_{\text{0.5}}$    & 4.3$\pm_{\text{0.2}}$    & 17.1$\pm_{\text{0.7}}$   & 29.4$\pm_{\text{0.6}}$   & 39.4$\pm_{\text{1.2}}$   & 14.5$\pm_{\text{2.0}}$  & 8.3$\pm_{\text{0.6}}$   & 20.4$\pm_{\text{0.8}}$        & 17.21$\pm_{\text{0.4}}$                    \\
Random          & 3B                              & 4.0$\pm_{\text{0.3}}$    & 3.9$\pm_{\text{1.0}}$    & 15.8$\pm_{\text{1.0}}$   & 29.1$\pm_{\text{2.3}}$   & 37.2$\pm_{\text{1.6}}$   & 13.8$\pm_{\text{0.7}}$  & 9.4$\pm_{\text{1.1}}$   & 17.8$\pm_{\text{2.2}}$        & 16.38$\pm_{\text{0.1}}$                    \\
EL2N            & 3B  & 3.0$\pm_{\text{0.3}}$    & 3.2$\pm_{\text{0.5}}$    & 15.5$\pm_{\text{0.9}}$   & 25.0$\pm_{\text{1.2}}$   & 31.2$\pm_{\text{1.1}}$   & 14.4$\pm_{\text{1.7}}$  & 8.9$\pm_{\text{1.6}}$   & 18.3$\pm_{\text{1.2}}$        & 14.94$\pm_{\text{0.4}}$                    \\
InfoBatch       & 3B  & 3.8$\pm_{\text{0.4}}$    & 3.3$\pm_{\text{0.6}}$    & 15.8$\pm_{\text{1.5}}$   & 25.8$\pm_{\text{1.5}}$   & 32.9$\pm_{\text{2.3}}$   & 12.5$\pm_{\text{1.3}}$  & 7.1$\pm_{\text{0.9}}$   & 16.5$\pm_{\text{2.3}}$        & 14.71$\pm_{\text{0.7}}$                    \\
Token Cleaning  & 3B                        & 4.2$\pm_{\text{0.6}}$    & 3.0$\pm_{\text{0.9}}$    & 15.4$\pm_{\text{0.7}}$   & 24.4$\pm_{\text{0.6}}$   & 31.7$\pm_{\text{1.0}}$   & 11.7$\pm_{\text{0.7}}$  & 10.2$\pm_{\text{1.2}}$   & 14.1$\pm_{\text{0.9}}$        & 14.3$\pm_{\text{0.5}}$                    \\
RHO-1 & 3B  & 3.2$\pm_{\text{0.2}}$    & 3.1$\pm_{\text{1.3}}$    & 15.3$\pm_{\text{0.5}}$   & 24.6$\pm_{\text{0.7}}$   & 31.4$\pm_{\text{0.5}}$   & 14.5$\pm_{\text{1.9}}$  & 7.1$\pm_{\text{2.3}}$   & 16.7$\pm_{\text{2.1}}$        & 14.49$\pm_{\text{0.3}}$                    \\
BLADE           & 3B     & 4.5$\pm_{\text{0.3}}$    & 3.7$\pm_{\text{0.4}}$    & 18.1$\pm_{\text{0.6}}$   & 29.9$\pm_{\text{1.7}}$   & 37.9$\pm_{\text{1.7}}$   & 16.5$\pm_{\text{2.1}}$  & 10.4$\pm_{\text{1.5}}$   & 19.3$\pm_{\text{2.1}}$        & 17.54$\pm_{\text{0.8}}$                    \\
\midrule \midrule
\multicolumn{11}{c}{Selection with Gap}                                                                                                                                                                                                                                                                     \\ \midrule
Base            & -                               & 2.9                      & 3.2                      & 11.0                      & 18.1                      & 20.4                      & 12.5                    & 14.6                    & 16.1                          &      12.4              \\
Base-CT         & 5B                             & 4.3$\pm_{\text{0.5}}$    & 4.3$\pm_{\text{0.2}}$    & 17.1$\pm_{\text{0.7}}$   & 29.4$\pm_{\text{0.6}}$   & 39.4$\pm_{\text{1.2}}$   & 14.5$\pm_{\text{2.0}}$  & 8.3$\pm_{\text{0.6}}$   & 20.4$\pm_{\text{0.8}}$        & 17.21$\pm_{\text{0.4}}$                    \\
Random          & 3B                              & 4.0$\pm_{\text{0.3}}$    & 3.9$\pm_{\text{1.0}}$    & 15.8$\pm_{\text{1.0}}$   & 29.1$\pm_{\text{2.3}}$   & 37.2$\pm_{\text{1.6}}$   & 13.8$\pm_{\text{0.7}}$  & 9.4$\pm_{\text{1.1}}$   & 17.8$\pm_{\text{2.2}}$        & 16.38$\pm_{\text{0.1}}$                    \\
EL2N            & 3B  & 3.0$\pm_{\text{0.3}}$    & 3.2$\pm_{\text{0.5}}$    & 15.5$\pm_{\text{0.9}}$   & 25.0$\pm_{\text{1.2}}$   & 31.2$\pm_{\text{1.1}}$   & 14.4$\pm_{\text{1.7}}$  & 8.9$\pm_{\text{1.6}}$   & 18.3$\pm_{\text{1.2}}$        & 14.94$\pm_{\text{0.4}}$                    \\
InfoBatch       & 3B  & 3.8$\pm_{\text{0.4}}$    & 3.3$\pm_{\text{0.6}}$    & 15.8$\pm_{\text{1.5}}$   & 25.8$\pm_{\text{1.5}}$   & 32.9$\pm_{\text{2.3}}$   & 12.5$\pm_{\text{1.3}}$  & 7.1$\pm_{\text{0.9}}$   & 16.5$\pm_{\text{2.3}}$        & 14.71$\pm_{\text{0.7}}$                    \\
Token Cleaning  & 3B                        & 4.2$\pm_{\text{0.6}}$    & 3.0$\pm_{\text{0.9}}$    & 15.4$\pm_{\text{0.7}}$   & 24.4$\pm_{\text{0.6}}$   & 31.7$\pm_{\text{1.0}}$   & 11.7$\pm_{\text{0.7}}$  & 10.2$\pm_{\text{1.2}}$   & 14.1$\pm_{\text{0.9}}$        & 14.3$\pm_{\text{0.5}}$                    \\
RHO-1 & 3B    & 15.1$\pm_{\text{1.1}}$    & 5.9$\pm_{\text{0.6}}$    & 32.6$\pm_{\text{1.3}}$   & 46.9$\pm_{\text{0.7}}$   & 63.8$\pm_{\text{0.5}}$   & 18.7$\pm_{\text{2.1}}$  & 15.1$\pm_{\text{1.0}}$   & 18.4$\pm_{\text{2.5}}$        & 27.06$\pm_{\text{0.4}}$                     \\
BLADE           & 3B     & 21.2$\pm_{\text{0.8}}$    & 8.1$\pm_{\text{1.3}}$    & 37.0$\pm_{\text{1.3}}$   & 51.4$\pm_{\text{0.4}}$   & 70.2$\pm_{\text{1.1}}$   & 21.2$\pm_{\text{0.2}}$  & 20.6$\pm_{\text{1.7}}$   & 20.8$\pm_{\text{0.9}}$        & 31.31$\pm_{\text{0.6}}$                    \\
\bottomrule
\end{tabular}
\end{sidewaystable}

\subsection{Limitations and Future Work}
While our analytic characterization and empirical results provide a unified, gap-centric account of data curation, several challenges remain open for future research. The limitations of this paper are as follows: (1) Due to limited computational resources, our real-world experiments evaluate models up to 1.1 billion parameters. (2) Our analytic characterization relies on static high-dimensional limits that abstract away LLM training dynamics and data curriculum order. While our theory cleanly distinguishes finite-budget acceleration from genuine advancement, extending these solvable models to capture precise convergence speeds or detailed data consumption trajectories presents an interesting direction for future investigation.

\subsection{Notation}
\label{app:notation}

\begin{table}[h]
\caption{Summary of the notations used throughout this paper. Variables only used in theoretical
analysis are grayed for better readability.}
\label{tab:notation}
\centering
\small
\renewcommand{\arraystretch}{1.25}
\begin{tabular}{p{0.28\linewidth} p{0.66\linewidth}}
\toprule
\textbf{Symbol} & \textbf{Meaning} \\
\midrule
\multicolumn{2}{l}{\emph{Data and dimensions}}\\
$N$ & input dimension. \\
$P$ & number of training examples. \\
\rowcolor{gray!40} $\alpha=P/N$ & total sample budget (samples per dimension). In high-dim limit $N,P\to\infty$, $\alpha$ fixed. \\
\rowcolor{gray!40} $\alpha_k=w_k\alpha$ & sample budget contributed by domain $k$. \\
$\bx\in\mathbb R^N$ & input data. \\
\rowcolor{gray!40} $y=\mathrm{sign}(\bB\cdot\bx)$ & label generated by a teacher. \\
$K$ & number of domains. \\
$\mathcal N(z;0,1)=e^{-z^2/2}/\sqrt{2\pi}$ & standard Gaussian density. \\
\midrule
\multicolumn{2}{l}{\emph{Teachers, student, model}}.\\
\rowcolor{gray!40} $\bB_k$ & teacher vector of domain $k$ ($\|\bB_k\|=\sqrt N$). \\
\rowcolor{gray!40} $\bB_{\text{tar}}$ & target teacher.  \\
\rowcolor{gray!40} $\phi$ & teacher angle (exogenous gap) between $\bB_k$ and the target. \\
\rowcolor{gray!40} $\bJ$ & student weights $=$ model parameters (replica symbol). \\
$\btheta$ & model parameters ($\btheta\equiv\bJ$). \\
\rowcolor{gray!40} $\rho$ & student--target overlap $\bJ\cdot\bB_{\mathrm{tar}}/N$ in Appendix~\ref{app:imperfect}. \\
\midrule
\multicolumn{2}{l}{\emph{Curation controls}}\\
$\bw$ & curation weights: mixture ratios $w_k$ ($\sum_k w_k=1$) or per-example keep weights. \\
$\widehat{\bw}$ & target mixture ratio. \\
$f$ & retained fraction of the candidate set. \\
\rowcolor{gray!40} $\lambda_{\mathrm{reg}}$ & regularization coefficient in the mixture model. \\
\rowcolor{gray!40} $\lambda_{\mathrm{sel}}$ & disagreement penalty in the selection policy. \\
\rowcolor{gray!40} $\nu^2$ & difficulty: nuisance variance orthogonal to a teacher (Appendix~\ref{app:mixture}) \\
\midrule
\multicolumn{2}{l}{\emph{Order parameters (state evolution / replica)}}\\
\rowcolor{gray!40} $m_k=\bJ\cdot\bB_k/N$ & student--teacher overlap for domain $k$. \\
\rowcolor{gray!40} $q=\|\bJ\|^2/N$ & student self-overlap. \\
\rowcolor{gray!40} $V,\,V_k$ & response term. $V_k$ per domain in the multi-environment case. \\
$V_k^\star=1-m_k^2/Q_k$ & teacher-conditional variance in the loss integrals ($Q_1=q$, $Q_2=q+\nu^2(q-m_2^2)$). \\
\rowcolor{gray!40} $\kappa$ & margin constraint parameter (hard-margin solver). \\
\rowcolor{gray!40} $R$ & student--training-teacher overlap; in the matched-target case it is also the target overlap. \\
\rowcolor{gray!40} $\hat m,\hat V,\hat q$ & conjugate (hat) overlaps in the SE fixed point. \\
\midrule
\multicolumn{2}{l}{\emph{Performance and diagnostic}}\\
$\mathcal E_\alpha(\bw)$ & target performance at sample budget $\alpha$ under allocation $\bw$; for selection we write $\mathcal E_\alpha(f;\phi)$. \\
$\varepsilon$ & fixed target-performance level used to define acceleration. \\
$\Delta$ & empirical train--target diagnostic on the baseline model (\Cref{sec:delta_test}) \\
\bottomrule
\end{tabular}
\end{table}

\end{document}